\definecolor{lightblue}{rgb}{0.0, 0.4, 0.902}
\definecolor{darkblue}{rgb}{0.0, 0.0, 0.545}
\definecolor{lightgreen}{rgb}{0.0, 0.59, 0.0}
\definecolor{darkgreen}{rgb}{0.0, 0.40, 0.0}
\definecolor{lightyellow}{rgb}{0.996, 0.834, 0.597}
\definecolor{darkred}{rgb}{0.80, 0.08, 0.18}
\definecolor{lightlightgreen}{rgb}{0, 0.79, 0.0}
\definecolor{yellowgreen}{rgb}{0.5, 0.5, 0.0}
\definecolor{light-gray}{gray}{0.95}
\newtheorem{theorem}{Theorem}  
\newtheorem{remark}{Remark}
\newtheorem{assumption}{Assumption}
\newtheorem{corollary}{Corollary}
\begin{document}

\title{FX-DARTS: Designing Topology-unconstrained Architectures with Differentiable Architecture Search and Entropy-based Super-network Shrinking}


\author{Xuan Rao, Bo Zhao,~\IEEEmembership{Senior Member,~IEEE}, Derong Liu,~\IEEEmembership{Fellow,~IEEE}, Cesare Alippi,~\IEEEmembership{Fellow,~IEEE}

\thanks{This work was supported in part by the National Key Research and Development Program of China under Grant 2018AAA0100203, in part by the National Natural Science Foundation of China under Grants 61973330, 62073085 and  62350055, in part by the Shenzhen Science and Technology Program under Grant JCYJ20230807093513027, in part by the Fundamental Research Funds for the Central Universities under grant 1243300008, and in part by the Beijing Normal University Tang Scholar. (Corresponding author: Bo Zhao)}
\thanks{Xuan Rao and Bo Zhao are with the School of Systems Science, Beijing Normal University, Beijing 100875, China (e-mail: raoxuan98@mail.bnu.edu.cn; zhaobo@bnu.edu.cn).}
\thanks{Derong Liu is with the School of System Design and Intelligent Manufacturing, Southern University of Science and Technology, Shenzhen 518000, China (email: liudr@sustech.edu.cn), and also with the Department of Electrical and Computer Engineering, University of Illinois Chicago, Chicago, IL 60607, USA (e-mail: derong@uic.edu).}
\thanks{Cesare Alippi is with the Dipartimento di Elettronica e Informazione, Politecnico di Milano, Milano 20133, Italy (e-mail: cesare.alippi@polimi.it), and also with the Universita’ Della Svizzera Italiana, Lugano, Switzerland (email: cesare.alippi@usi.ch).}

\thanks{© 2025 IEEE. This manuscript is submitted to IEEE Transaction on Neural Network and Learning Systems and is under reviewed. Personal use of this manuscript is permitted. Permission from IEEE must be obtained for all other uses, in any current or future media, including reprinting/republishing this material for advertising or promotional purposes, creating new collective works, for resale or redistribution to servers or lists, or reuse of any copyrighted component of this work in other works.}
}

\markboth{Submitted to IEEE transaction on neural network and learning systems}%
{Rao \MakeLowercase{\textit{et al.}}: FX-DARTS: Exploring Topology-unconstrained Architectures with Differentiable Architecture Search and Entropy-based Super-network Shrinking}

\maketitle

\begin{abstract}
Strong priors are imposed on the search space of Differentiable Architecture Search (DARTS), such that cells of the same type share the same topological structure and each intermediate node retains two operators from distinct nodes. While these priors reduce optimization difficulties and improve the applicability of searched architectures, they hinder the subsequent development of automated machine learning (Auto-ML) and prevent the optimization algorithm from exploring more powerful neural networks through improved architectural flexibility. This paper aims to reduce these prior constraints by eliminating restrictions on cell topology and modifying the discretization mechanism for super-networks. Specifically, the Flexible DARTS (FX-DARTS) method, which leverages an Entropy-based Super-Network Shrinking (ESS) framework, is presented to address the challenges arising from the elimination of prior constraints. Notably, FX-DARTS enables the derivation of neural architectures without strict prior rules while maintaining the stability in the enlarged search space. Experimental results on image classification benchmarks demonstrate that FX-DARTS is capable of exploring a set of neural architectures with competitive trade-offs between performance and computational complexity within a single search procedure.
\end{abstract}

\begin{IEEEkeywords}
Neural architecture search, Automated machine learning, Differentiable architecture search, Flexible neural architecture, Super-network shrinking. 
\end{IEEEkeywords}

\section{Introduction}
Over the past decade, the powerful representation ability of deep neural networks (DNNs) has contributed to significant progress in various machine learning tasks, including computer vision \cite{szegedy2015going, gao2021structure}, natural language processing \cite{devlin2018bert, otter2020survey}, system identification and control \cite{chen2018optimal, yang2018hierarchical}, time series prediction \cite{bandara2020lstm}, and autonomous vehicles \cite{gao2018object, gao2021interacting, gao2021trajectory}, among others. Undoubtedly, the architecture design of neural networks plays a pivotal role in these breakthroughs \cite{hochreiter1997long, liu2006motif, li2016deep, chollet2017xception, yu2015multi}. To address the labor-intensive and time-consuming trial-and-error process of DNN architecture design, neural architecture search (NAS) \cite{he2021automl} has emerged as a promising approach. NAS automates the exploration of a vast space of potential architectures, traditionally through three key steps: defining a search space, selecting a search algorithm, and identifying an optimal architecture within the search space. The effectiveness of NAS heavily relies on the careful design of both the search space and the search strategy, as a well-constructed search space can significantly enhance the search algorithm's ability to discover optimal neural architectures \cite{radosavovic2020designing}.

The primary focus of this paper is to address the observation that most cell-based Neural Architecture Search (NAS) methods impose stringent prior rules on the designed architectures, which indicates that Automated Machine Learning (Auto-ML) has not yet to achieve full automation. Drawing a parallel to the evolution from manual feature engineering to end-to-end learning in deep learning (DL), we argue that fewer prior constraints should be imposed on the architectures explored by the NAS algorithms. 

\begin{figure}
	\centering
	\subfloat[Search space comparison]{
		\includegraphics[width=0.8\linewidth]{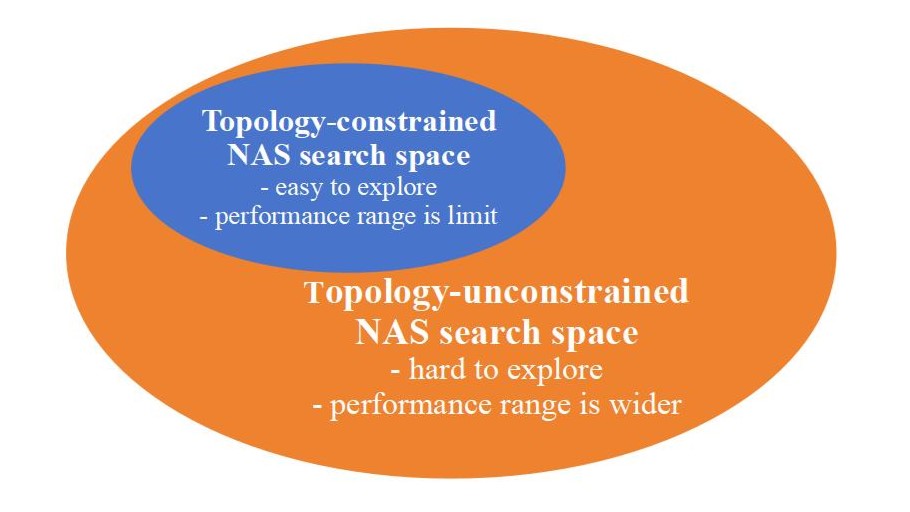}
		\label{fig:search_space}
	}
	\\
	
	\subfloat[DARTS-V2]{
		\includegraphics[height=0.25\textheight]{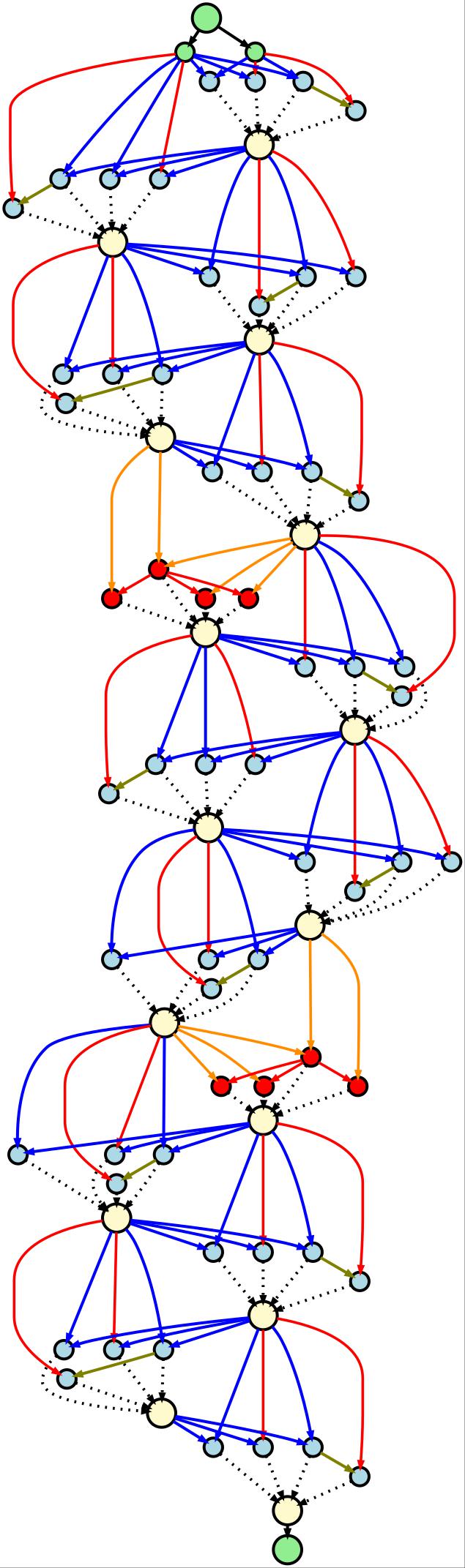}
		\label{fig:darts_cifar}
	}
	\subfloat[CDARTS]{
		\includegraphics[height=0.25\textheight]{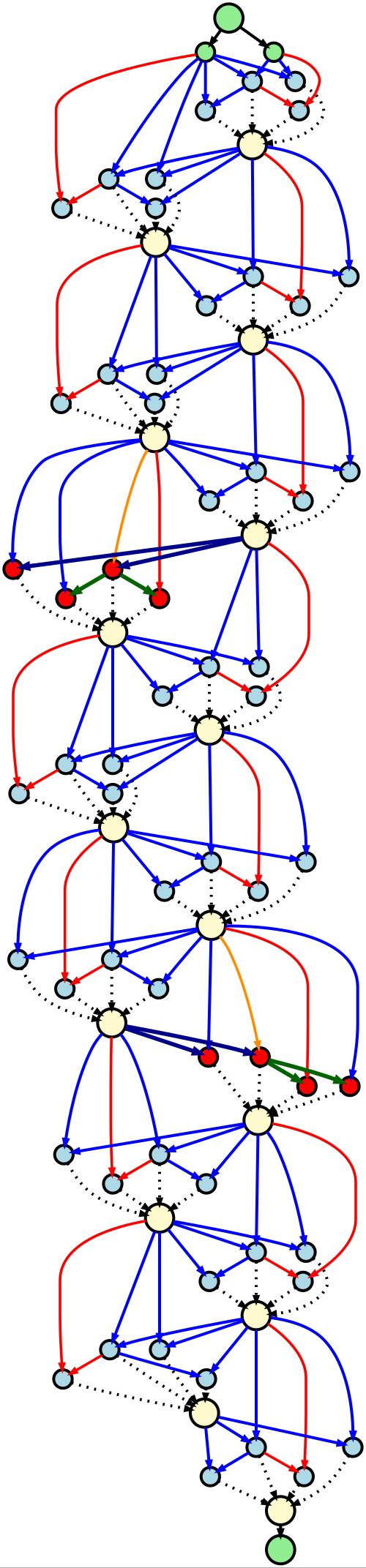}
		\label{fig:cdarts_cifar}
	}
	\subfloat[FX-DARTS]{
		\includegraphics[height=0.25\textheight]{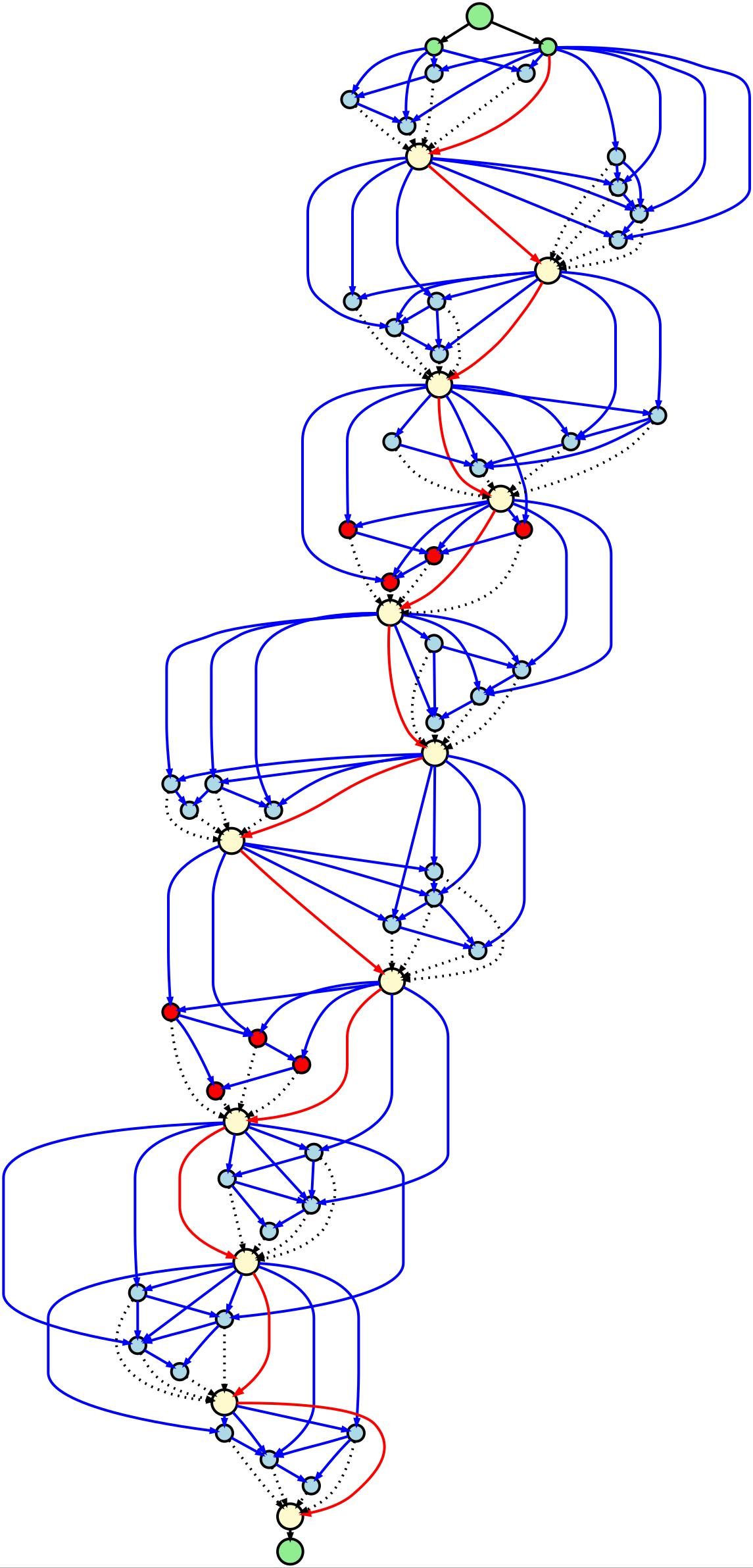}
		\label{fig:genotype_48epoch}
	}
	\subfloat[FX-DARTS]{
		\includegraphics[height=0.25\textheight]{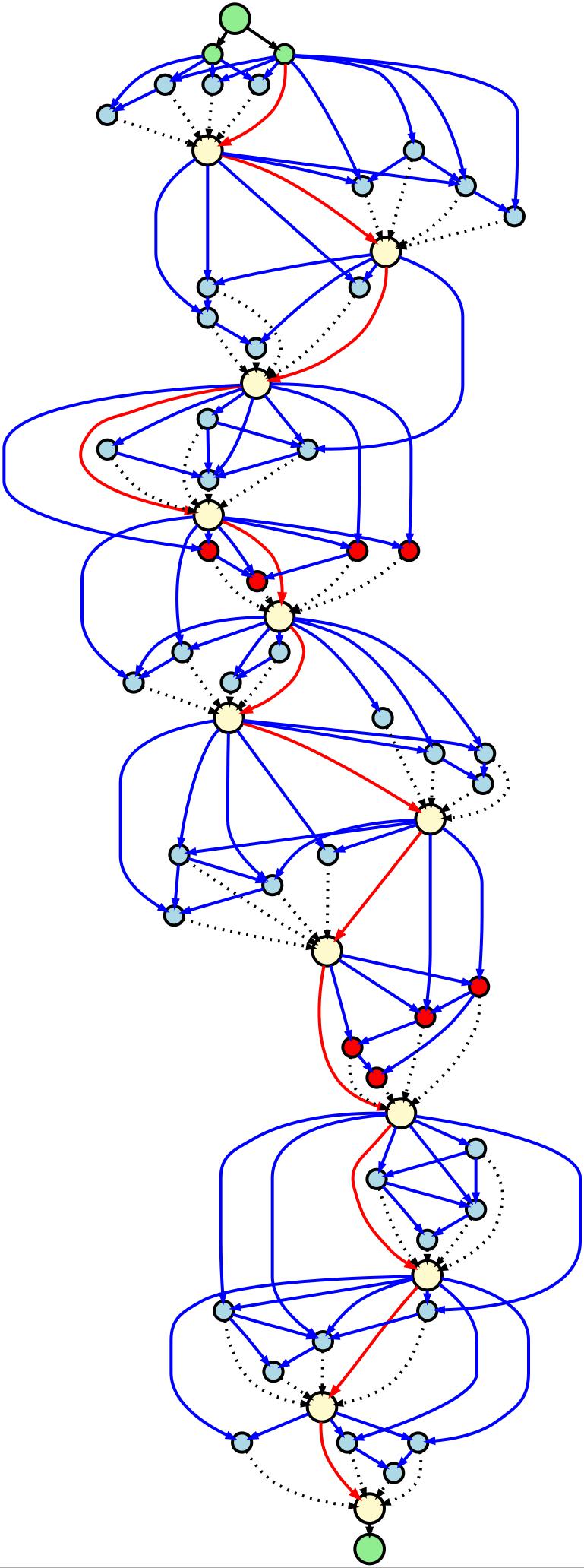}
		\label{fig:genotype_64epoch}
	}
	\caption{Visualization of topology-constrained architectures and our flexible architectures derived on the CIFAR dataset. (a) Search space comparison. Specifically, topology-constrained architectures are included in the topology-unconstrained search space. (b) and (c) Architectures derived in the topology-constrained search space by DARTS and CDARTS, respectively. These methods assume that cells of the same type share the same structure. (d) and (e) Architectures derived by our FX-DARTS in the topology-unconstrained search space in one NAS procedure. Particularly, all cells have their unique structures.}
	\label{fig:dartscifar}
\end{figure}

To this end, this paper explores neural architectures with more flexible structures under the framework of Differentiable Architecture Search (DARTS) \cite{liu2018darts} (the comparisons between topology-constrained and unconstrained search spaces are depicted in Fig. \ref{fig:search_space}). Our approach offers several key benefits. First, it reduces the imposition of prior constraints on architecture topology, which aligns with the progressive development of Auto-ML. Moreover, an enlarged search space provides the potential for discovering more powerful neural networks with preferable trade-offs between representation performance and computational complexity.

Formally, DARTS transforms the discrete search space into a continuous one by replacing the categorical selection of a specific operator between nodes with a weighted combination of all possible operators. The weights for these operators are parameterized by a set of architectural parameters. Under this framework, both model parameters (neural network weights) and architectural parameters (architecture design) are optimized jointly in an end-to-end manner using gradient descent, enabling the efficient architecture search. Despite the demonstrated efficacy of DARTS-based approaches, it remains some drawbacks worth considering, which are partially addressed in this work.
\begin{itemize}[]
	\item As depicted in Fig. \ref{fig:darts_cifar} and \ref{fig:cdarts_cifar} , DARTS-like approaches construct architectures by interleaving neural cells, where cells of the same type (normal or reduction cells) share identical topological structures. Although this constrained search space improves stability and portability, it restricts structural flexibility, and potentially limits advancements in representation performance and computational efficiency. Importantly, there is no guarantee that an architecture with such a constrained structure is better than that with a unconstrained structure.
	\item The discrete architectures derived from DARTS are governed by strong prior rules. For instance, each intermediate node is restricted to preserve two operators only from distinct nodes. While the constraint simplifies the optimization process, it narrows the diversity of architectures that can be explored. As a result, the architecture search is biased toward structures that conform to these predefined rules.
	\item In particular, DARTS-based approaches optimize model parameters and architectural parameters alternately, which results in the co-adaptation of model training and architecture selection.  
\end{itemize}

The inherent constraints of cell topology and the discretization mechanism for the super-network significantly limit the architectural flexibility. While removing these inductive biases could theoretically enhance the design freedom, but two critical challenges emerge naturally. First, without topological constraints and operator retention rules, ambiguity arises in determining the types and quantities of operators to preserve for each computing node during the super-network discretization process. Additionally, the removal of these constraints exponentially expands the search space, leading to optimization instability and amplifying the adaptation gap between super-network training and architecture optimization. 

To address these challenges, we propose the Flexible DARTS (FX-DARTS) method, which primarily incorporates the Entropy-based Super-network Shrinking (ESS) framework. Our approach introduces two key innovations. First, to decouple the co-adaptation between model parameters and architectural parameters, we implement a distinct separation between super-network training and architecture optimization. Second, to effectively derive candidate architectures from the super-network, FX-DARTS employs a progressive discretization process that biases the super-network toward sparser structures. This controlled shrinking mechanism facilitates the discovery of diverse neural architectures with varying computational complexities within a single search procedure. The main contributions of this work are summarized as follows.

\begin{enumerate}
	\item The imposition of prior constraints on cell topology and the discretization mechanism for the super-network is minimized to enable the derivation for more flexible architectures which potentially have more powerful representation capabilities.
	\item The FX-DARTS method is introduced to effectively address the challenges associated with the removal of prior constraints. Specifically, our method decouples the optimization processes of super-network training and architecture selection to avoid the co-adaptation for model training and architecture optimization. In addition, it enables the elegant derivations of discrete architectures through progressive super-network shrinking without relying on strong prior constraints.
	\item Extensive experimental evaluations on image classification benchmarks demonstrate that FX-DARTS achieves a superior trade-off between the performance and computational complexity compared to other NAS architectures. Notably, FX-DARTS attains 76.4\% top-1 accuracy on ImageNet-1K with 5.1M parameters and 610M FLOPs, which outperforms numerous state-of-the-art topology-constrained NAS methods while maintaining efficient search costs of merely 3.2 GPU-hours.
\end{enumerate}

\section{Related Works}
\subsection{Neural architecture search}
Aiming to automate the design process of neural networks, NAS has attracted extensive attention from researchers and has been applied to different DL domains. Early NAS methods are mainly developed based on Reinforcement Learning (RL) \cite{zoph2018learning, baker2016designing, pham2018efficient} and Evolutionary Algorithms (EAs) \cite{liu2017hierarchical, real2019regularized}. Operating primarily in discrete search spaces, these methods sample and train hundreds of candidates from scratch to evaluate the validation accuracy to learn a meta-controller for architecture sampling. However, the extensive computational costs prohibit the generalization of these methods to more challenging NAS tasks. Some efforts have been made to accelerate the search process of NAS based on the strategy of weight-sharing \cite{pham2018efficient}.

In addition, some approaches, which can roughly be classified into Latent Space Optimization (LSO) \cite{luo2018neural} and differentiable NAS \cite{liu2018darts}, achieve architecture search in a continuous space. Regarding NAS tasks as discrete optimization problems of Directed Acyclic Graphs (DAGs), LSO-based approaches train deep generative models to learn the continuous representations of architectures, transforming the search space from discrete to continuous \cite{DBLP:conf/nips/YanZAZ020}. In this way, neural architectures are searched in the continuous latent space and evaluated in the discrete space. For example, the DAG Variational Autoencoder (D-VAE) is proposed to learn the continuous representations of neural architectures \cite{DBLP:conf/nips/ZhangJCGC19}, such that architectures can be optimized in a continuous space by Bayesian optimization. Convexity Regularized LSO (CR-LSO) utilizes the guaranteed convexity of input convex neural networks to regularize the learning process of the latent space, obtaining an approximately convex mapping of architecture performance \cite{rao2022cr}.  

Coupled with the strategy of weight-sharing, differentiable NAS transforms the discrete optimization of NAS into continuous optimization using techniques such as soft attentions \cite{liu2018darts} and reparameterization tricks \cite{xie2018snas}. As the most representative method, DARTS reduces the search cost of NAS to several GPU-hours, which has attracted a lot of attention in the literature \cite{chen2021progressive, bi2020gold, tian2021discretization, singh2024kl, jing2023architecture, xue2023improved, chen2023mngnas, 9432795}. Numerous follow-up works focus on overcoming the instability of DARTS \cite{chu2020fair, chu2020darts, chu2020noisy}.

Despite the effectiveness, some evidence shows that the search space of DARTS plays an important role, such that even an architecture derived by random search achieves competitive performance \cite{li2020random}. Furthermore, it is reported that the use of training tricks in the evaluation phase plays a predominant role in the reported performance of searched architectures in DARTS-based methods. Also, the cell-based search space of DARTS exhibits a very narrow accuracy range, making the degree of discrimination among the architectures searched by different state-of-the-art methods not obvious enough \cite{DBLP:conf/iclr/YangEC20}. 

There are weight-sharing NAS methods which differ from the variants of DARTS \cite{stamoulis2019single, cai2018proxylessnas, su2021k}. ProxylessNAS uses path-level pruning with binary gates to reduce GPU memory consumption, which enables the direct search on ImageNet without proxy tasks \cite{cai2018proxylessnas}. It also introduces the latency modeling through a differentiable loss to optimize hardware-specific performance. However, its search space is limited to 648 candidates, since it restricts hierarchical interactions between non-sequential nodes. To reduce the weight co-adaption problem in weight-sharing NAS and improve the NAS efficiency, the single-path NAS is suggested by sampling candidate operators between sequential nodes with a uniform sampling strategy \cite{stamoulis2019single}. To enable more flexible weight sharing strategy across sub-networks in weight-sharing NAS, K-shot NAS is established by assigning the operator weights of different paths by a convex combination of several independent meta-weights \cite{su2021k}. Similar to ProxylessNAS, both single-path and K-shot NAS focus on sequential node interactions and lack mechanisms for hierarchical connections between non-sequential blocks, which constraint the architectural flexibility.

\subsection{Flexible architecture search}
Currently, most cell-based NAS methods rely on the topology-sharing strategy, where normal cells or reduction cells share the same topological structure. The motivation behind this strategy is reasonable since it reduces the difficulty of searching an architecture by focusing on micro neural cells. This complexity reduction has been verified by many manually designed or automated architectures, such as ResNet \cite{DBLP:conf/cvpr/HeZRS16}, ShuffleNet \cite{DBLP:conf/cvpr/ZhangZLS18}, NASNet \cite{baker2016designing}, DARTS \cite{liu2018darts}, ViT \cite{DBLP:conf/iclr/DosovitskiyB0WZ21}, etc. Some evidence also shows that decoupling the cell-based NAS into operator search for edges and topology search for cells can further improve the performance of DARTS \cite{9412285, DBLP:conf/cvpr/GuW0YWLC21}. This implies that decoupled search spaces, which are smaller, assist to discover high-performance architectures more easily. Unfortunately, current research still relies on the topology-sharing strategy to avoid the direct search for an architecture.

Nevertheless, the basic building blocks of these topology-constrained architectures, namely the neural cell for brevity, become component-diverse and topology-flexible. This phenomenon presents a trend to design neural architectures with flexible structures. Furthermore, we emphasize that the topology-sharing strategy implemented by these methods goes against the successive development of data-driven DL and Auto-ML, since learning systems have been imposed strong priors by human experts. Additionally, it is not reasonable to assume that an optimal architecture in an constrained space is also optimal in an unconstrained space. Note that architectures in the constrained space are also included in the unconstrained space. Thus, if a topology-constrained architecture performs well on a given task, there are potentially better architectures in the unconstrained space. Now, the problem becomes how NAS methods can overcome the challenges caused by the exponential enlargement of the architecture design space, which is partially answered in this paper. Furthermore, we emphasize that the attempt to search neural architecture in an unconstrained space is important for the NAS community, as it aligns with the successive development of Auto-ML. Analogous to the transition from feature engineering to end-to-end DL, fewer priors from human experts will be imposed on the searched architectures in NAS.

In contrast to investigating topology-sharing search spaces, some efforts have been made to derive neural architectures with more flexible structures. For example, a novel compact group neural architecture has been empirically demonstrated to be more effective than popular architectures for the classification task of noise-corrupted images \cite{9216490}.
However, the proposed model is not applicable to large representation learning tasks, such as ImageNet classification. RandWire-WS utilizes a stochastic network generator to create randomly wired neural networks, and the evaluation results show that a flexible architecture achieves the top-1 accuracy of 74.7\% on ImageNet, which is higher than those of DARTS architectures \cite{DBLP:conf/iccv/XieKGH19}. GOLD-NAS attempts to stabilize the one-level optimization of DARTS with the data augmentation technique and to derive flexible architectures with progressive pruning of the super-network \cite{bi2020gold}. However, the main operator space employed by GOLD-NAS contains only two operators. One of the operators is the $3\times3$ depthwise separable convolution, which has been demonstrated to be the most effective operator in the DARTS search space \cite{9412285}. Differentiable Neural Architecture Distillation (DNAD) utilizes the distilled knowledge of a well-trained network to guide the optimization of the super-network in the unconstrained space, and the results show that distillation has stronger regularization impacts than the data augmentation technique \cite{rao2022dnad}. Nevertheless, the mechanism of DNAD is complicated, since some hyper-parameters require careful selection.

There are 

\section{Preliminary}
\subsection{Cell-based Convolution Neural Architectures}
Formally, an architecture is composed of $L$ neural cells, where the cell $k$ $(1 \le k \le L)$ can be abstracted as a DAG with $N$ nodes. For the sake of simplicity, the cell notation $k$ is omitted temporally until the proposed FX-DARTS is introduced. In NAS, the topological structures are the main concerns regardless of the inner model parameters. 

In the DAG which represents a cell, each node denotes a layer in the neural network, and each directed edge $(i,j)$ is associated with a specific operator $f^{o}_{(i,j)}$ that propagates information from layers $i$ to $j$. Specifically, the operator type $o$ for $f^{o}_{(i,j)}$ (e.g., the skip connection, the $3\times3$ depthwise separable convolution and zero operation) is selected from a pre-defined operator space $\mathcal{O}$. The trainable parameters of $f^{o}_{(i,j)}$ is denoted as $\theta_{(i,j)}^{o}$.

For an arbitrary cell, the first two nodes $x_1$ and $x_{2}$ represent the inputs of current cell. They are selected as the outputs of previous two cells, respectively. The other nodes except the last one, i.e., $x_{3}$ to $x_{N-1}$, are computing nodes. The value of a computing node $x_{j}\ (2 < j < N) $ is the sum of information propagated by two specific operators from two distinct nodes 
\begin{align}
	x_{j} = f_{(i,j)}^o\left(x_{i}\right) + f_{(i',j)}^{o'} \left(x_{i'}\right),
\end{align}
where $i \in \mathcal{N}(j) = \{ 1 \le i < j \}$ is the previous nodes of node $j$ of current cell and $i'\neq j$. The output of a cell is the concatenation of the outputs of all intermediate nodes as
\begin{align}
	x_{N} = {\rm concat}\left([x_{3},x_{4},...,x_{N-1}], {\rm dim}=1 \right),
\end{align}
where PyTorch-like code is utilized to express the formulation. 

Formally, given the cell number $L$, the node number $N$ and the operator space $\mathcal{O}$ as the pre-defined hyper-parameters before architecture search, the goal of DARTS is to choose two input edges $f_{(i,j)}^{o}$ and $f_{(i',j)}^{o'}$ from two distinct nodes $i$ and $i'$ for each computing node $j$. As it can be seen, the selection for an input edge $f_{(i,j)}^{o}$ involves two considerations, namely, the information source $i \in \mathcal{N}(j)$ and the operator type $o \in \mathcal{O}$.  

\subsection{Continuous Relaxation of Search Space}
The intrinsic goal of DARTS is to solve a discrete graph optimization problem, which involves to determine the optimal topological structures for neural cells. To make the optimization process of NAS differentiable, DARTS formulates the information propagation from nodes $i$ to $j$ as a weighted sum of all operators in $\mathcal{O}$ as
\begin{align}
f_{(i,j)}^{\tilde{o}} \left(x_{i} \right) = \sum_{o\in\mathcal{O}} a_{(i,j)}^{o} \cdot f_{(i,j)}^{o}\left(x_{i}\right),
\end{align}
where 
\begin{equation}
	a_{(i,j)}^{o} = {\rm Softmax}\left(\mathbf{\alpha}_{(i,j)}\right)_{o} = \frac{{\rm exp}\left(\alpha^{o}_{\left(i,j\right)}\right)}{\sum_{o'\in \mathcal{O}}{\rm exp}\left(\alpha_{\left(i,j\right)}^{o'}\right)}
\end{equation}
is the contribution weight of the operator $o$ in edge $(i,j)$, and $\alpha_{(i,j)}^{o}$ is the meta-parameter to be optimized which determines the contribution for operator $o$ in edge $(i,j)$. Then, the value for computing node $j$ is reformulated
\begin{align}
	x_{j} = \sum_{i \in \mathcal{N}(j)} f_{(i,j)}^{\tilde{o}}\left(x_{i}\right).
\end{align}
The set of meta-parameters ${\bm{\alpha}} = \{ \alpha_{(i,j)}^{o} \}$ for $1 \le i < j < N$ and $o \in \mathcal{O}$ is called architectural parameters (note that the cell notation $k$ is omitted). Similarly, the set of trainable parameters $\bm{\theta} = \{ \theta_{(i,j)}^{o} \}$ for these operators and some pre-defined layers is called model parameters. 

By replacing the information propagation between two nodes by the weighted mixture of operators, a super-network $f_{(\bm{\alpha},\bm{\theta})}^{\rm Super} \left( \textbf{x} \right)$ whose sub-graphs are the candidate neural architectures is constructed. Due to the continuous relaxation of operator selections, the architectural parameters $\bm{\alpha}$ and model parameters $\bm{\theta}$ of $f_{(\bm{\alpha},\bm{\theta})}^{\rm Super} \left( \textbf{x} \right)$ can be optimized jointly or alternately in an end-to-end manner. Let $\mathcal{D}_{\rm train}$ and $\mathcal{D}_{\rm valid}$ be the training and validation sets, respectively, and 
\begin{align}
	p_{{\bm{\alpha}}, \bm{\theta}}\left( \mathbf{y} \vert \mathbf{x} \right) = {\rm Softmax}\left( f_{(\bm{\alpha},\bm{\theta})}^{\rm Super} \left( \textbf{x} \right) \right)
\end{align}
be the predictive distribution of the neural network given the supervised sample $( \mathbf{x}, \mathbf{y})$. In DARTS-like approaches, the optimization for $\bm{\alpha}$ and $\bm{\theta}$ in time step $t$ follows the alternating manner as
\begin{equation}
	\begin{aligned}
		{\bm{\theta}}_{t + 1} &= \bm{\theta}_{t} - \eta_{\bm{\theta}}\nabla_{\bm{\theta}}\mathcal{L}_{\rm CE} \left(\bm{\theta}_{t},\bm{\alpha}_{t}; \mathcal{D}_{\rm train}\right), \\
		{\bm{\alpha}}_{t + 1} &= \bm{\alpha}_{t} - \eta_{\bm{\alpha}}\nabla_{\bm{\alpha}}\mathcal{L}_{\rm CE}\left(\bm{\theta}_{t + 1},\bm{\alpha}_{t};\mathcal{D}_{\rm valid}\right),
	\end{aligned}
\end{equation}
where 
\begin{align}
	\mathcal{L}_{\rm CE} \left( \bm{\alpha}, \bm{\theta}; \mathcal{D} \right) = \mathbb{E}_{(x,y) \in \mathcal{D}} \left[ \textbf{y}^T \log p_{\bm{\alpha}, \bm{\theta}} \left( \mathbf{y}| \mathbf{x} \right) \right]
	\label{eq:ce}
\end{align}
is the cross-entropy (CE) loss.
\subsection{Topology-Constrained Super-network Optimization and Discretization}
In DARTS-like convolution neural architectures, there are two types of cells, namely the normal cell and the reduction cell, respectively. Totally, there are two reduction cells locating at the 1/3 and 2/3 network depths, respectively, whose function is to reduce the image resolutions by half and double the number of channel maps. To stabilize the optimization, it is always assumed that cells of the same type share the same topological structure in DARTS-like approaches. Correspondingly, there are only two distinct architectural sets $\bm{\alpha} = \{ \bm{\alpha}_{\rm normal}, \bm{\alpha}_{\rm reduce} \}$ in the super-network $f_{(\bm{\alpha},\bm{\theta})}^{\rm Super} \left( \textbf{x} \right)$. 

After the optimization of the super-network, a discretization mechanism is required to determine the searched neural architecture. DARTS-like approaches assume that a computing node should receive information from two distinct nodes. To determine the preserved operators. The discretization mechanism follows two steps. First, for edge $(i,j)$ where node $j$ is a computing node, the strongest operator $o_{(i,j)}^{\rm MAX}$ is identified by evaluating the largest contribution weight as
\begin{align}
	o_{(i,j)}^{\rm MAX} = {\arg \max}_{o \in \mathcal{O}} a_{(i,j)}^{o}.
\end{align}
Note that we replace $f_{(i,j)}^{o}$ by $o_{(i,j)}$ for notation simplicity. We also denote the contribution weight for $o_{(i,j)}^{\rm MAX}$ as $a_{(i,j)}^{\rm MAX}$. After then, two strongest operators from previous nodes $\{ o_{(i,j)}^{\rm MAX} \mid i \in \mathcal{N}(j) \}$ are preserved based on the two largest contributions weights in $\{ a_{(i,j)}^{\rm MAX} \mid i \in \mathcal{N}(j) \}$.

\section{Flexible Differentiable Architecture Search}
\subsection{Breaking the Topological Constraints in DARTS}
It is worth emphasizing that while the topology-constrained strategy contributes to stabilizing the optimization process of NAS and improving the transferability of derived architectures, it inherently limits the flexibility and potential representational capacity of neural architectures. Assuming that architectures designed within a constrained search space remain optimal in an unconstrained search space is an unreasonable premise, as depicted in Fig. \ref{fig:nascomparison}. Such constraints significantly reduce the diversity of neural architectures with varying computational complexities. These restrictive rules, to some extent, conflict with the fundamental objectives of NAS, which aims to explore a broad and diverse range of architectural possibilities to achieve optimal performance.

Since different operators exhibit varied preferences in feature extraction, there may be advantages in processing features at various depths using neural cells with diverse structures. For example, $3\times3$ and $5\times5$ convolution operators tend to extract texture features due to their limited receptive fields, whereas $3\times3$ and $5\times5$ dilated convolution operators capture broader features because of their expanded receptive fields.

The above analysis motivates us to explore the untapped potential of NAS by the exploration of more flexible architectures. In particular, two critical modifications are proposed as
\begin{itemize}[]
	\item The topology-sharing constraints of cells in DARTS is entirely removed to permit unique structures for each cell. This modification results in an exponentially enlarged search space which is embodied in the cell-unique architectural parameters ${\bm \alpha} = \{ {\bm \alpha}_{1}, {\bm \alpha}_{2},...,{\bm \alpha}_{L} \}$.
	
	\item The discretization mechanism, which restricts each computing node to accept two distinct inputs, is eliminated to further reduce the manual priors on architectures.
\end{itemize}
To adapt to these changes, the information propagation within a cell is modified accordingly. Specifically, the architectural parameters of node $j$ are normalized in a node-wise manner, which results in the contribution weights of operators in FX-DARTS being calculated as
\begin{align}
	a_{(i,j)}^{o} = \frac{ \exp\left(\alpha_{(i,j)}^{o} \right) }{ \sum_{i' \in\mathcal{N}(j)} \sum_{ o' \in \mathcal{O}}{\exp \left(\alpha_{(i,j)}^{o'} \right)}}.
	\label{eq:contribution_weights}
\end{align}
Correspondingly, the value of node $j$ is reformulated as
\begin{align}
	x_{j} = \sum_{i \in \mathcal{N}(j)} \sum_{o\in \mathcal{O}} a_{(i,j)}^{o} f_{(i,j)}^{o}\left(x_{i}\right).
\end{align}

%
%

\subsection{Flexible DARTS}
The absence of topological constraints in search spaces introduces fundamental challenges for neural architecture search, i.e., how to determine which operators to retain when no prior structural rules exist, and how to balance exploration of diverse architectures with exploitation of optimal configurations. To address these challenges, we propose FX-DARTS, a novel differentiable architecture search method enhanced with an Entropy-based Super-network Shrinking (ESS) framework, as detailed in Algorithm \ref{alg:ESS}. Below presents the key component systematically.
\begin{algorithm}[h]
	\caption{Entropy-based Super-network Shrinking (ESS)}
	\label{alg:ESS}
	\textbf{Input: }{Dataset $\mathcal{D}_{\rm train}$, shrinking coefficients $c_{1}$ and $c_{2}$, reinitialization rounds $R_{\rm init}$ and architecture optimization epoch $T_{\rm search}$}. \\
	\textbf{Begin: }
	\begin{algorithmic}[1]
		\STATE  Initialize the warm-up epoch $T_{\rm warm}$ by $T_{\rm warm} \leftarrow T_{\rm search} / 2$. \\
		\item[] \textcolor{gray}{\textbf{// Initialization of the super-network	$f_{(\bm{\alpha},\bm{\theta})}^{\rm Super}(\textbf{x})$}} 
		\STATE Let $\bm{\alpha} = \{ \}$.
		\FOR {each cell $k \in [1,...,L]$, each computing node $j \in [3,...,N-1]$, each input node $i \in \mathcal{N}(j)$, and each operator $o \in \mathcal{O}$}
		\STATE Initialize $\alpha_{(k,i,j)}^{o} \leftarrow 0$ and append it to $\bm{\alpha}$.
		\ENDFOR
		\STATE Initialize the expected entropy reduction $\Delta E$ by \eqref{eq:delta_entropy}. \\
		\STATE{Architectures $\mathcal{A} = \{ \}$}.
		\FOR{$R \in \left[1,...,R_{\rm init}\right]$ }
		\item[] \textcolor{gray}{\textbf{// Cycling reinitialization for the model parameters}}
		\STATE{Initialize the model parameters $\bm{\theta}$ of the super-network.}
		\item[]  \textcolor{gray}{\textbf{// Warm-up phase}}
		\FOR{$T \in [1, ..., T_{\rm warm}]$ and each step}
		\STATE Compute the CE loss $\mathcal{L}\left( \bm{\theta}, \bm{\alpha} \right)$ by \eqref{eq:ce}.
		\STATE Update $\bm{\theta} \leftarrow \bm{\theta} - \eta_{\bm{\theta}}\nabla_{\bm{\theta}}\mathcal{L}(\bm{\theta},\bm{\alpha}; \mathcal{D}_{\rm train})$.
		\STATE Update $\bm{\alpha} \leftarrow \bm{\alpha} - \eta_{\bm{\alpha}}\nabla_{\bm{\alpha}}\mathcal{L}(\bm{\theta},\bm{\alpha};\mathcal{D}_{\rm train})$.
		\ENDFOR
		
		\item[]  \textcolor{gray}{\textbf{// Architecture optimization phase}}
		\FOR{$T \in [T_{\rm warm} + 1, ..., T_{\rm search}]$ and each step}
		\STATE Compute the overall loss $\mathcal{L}_{\rm All}\left( \bm{\theta}, \bm{\alpha} ; \lambda_{1:L} \right)$ by \eqref{eq:overall_loss}.
		\STATE Update $\bm{\alpha} \leftarrow \bm{\alpha} - \eta_{\bm{\alpha}}\nabla_{\bm{\alpha}}\mathcal{L}_{\rm All}\left(\bm{\theta},\bm{\alpha}; \lambda_{1:L} \right)$.
		\STATE{Execute Algorithm \ref{alg:discretization} to discretize the super-network}.
		\item[]  \textcolor{gray}{\textbf{// Feedback-based adaptive coefficient adjustment}}
		\FOR{$k \in [1,...,L]$}
		\STATE{Calculate cell entropy $E_{k}$ using \eqref{eq:cell_entropy}.}
		\IF{$E_{t-1}^{k} - E_{t}^{k} < \Delta E$}
		\STATE{$\lambda_{k} \leftarrow c_{1}\lambda_{k}$ where $c_{1} > 1$.} 
		\ELSE
		\STATE{$\lambda_{k} \leftarrow c_{2} \lambda_{k}$ where $0 < c_{2} < 1$.}
		\ENDIF
		\ENDFOR
		\ENDFOR
		\item[]  \textcolor{gray}{\textbf{// Save the architecture}}
		\STATE Save current $\bm{\alpha}$ to $\mathcal{A}$.
		\ENDFOR
	\end{algorithmic}
	\textbf{Output: }$\mathcal{A}$.
\end{algorithm}
\paragraph{Information Entropy Loss} Let $\bm{\alpha} = \{\alpha_{(k,i,j)}^o\}$ denote architecture parameters where $\alpha_{(k,i,j)}^o$ represents the meta-parameter for selecting operator $o$ between nodes $(i,j)$ in cell $k$, and let $\bm{\theta}$ be the model parameters of the super-network. To guide the super-network toward a sparse structure (the shrinking process), we introduce an information entropy loss to measure the sparsity of the super-network at each computing node. Specifically, for node \( j \) in cell $k$, the entropy loss is defined as

\begin{align}  
	\mathcal{H}^{\rm node}_{(k,j)}\left( \bm{\alpha} \right) =  - \sum_{i \in \mathcal{N}(j)} \sum_{o\in \mathcal{O}} a_{(k,i,j)}^{o} \cdot \log a_{(k,i,j)}^{o},
\end{align}  
where \( a_{(k,i,j)}^{o} \) represents the contribution weight of operator \( o \) between nodes \( i \) and \( j \) of cell $k$. Specifically, minimizing this term encourages the contribution weights to concentrate on fewer operators, thereby the sparsity of the super-network is induced. By summing the entropy losses for all computing nodes, we obtain the cell-level sparsity entropy as

\begin{align}  
	\mathcal{H}_{(k)}^{\rm cell} \left( \bm{\alpha} \right) = \sum_{2 <  j < N} \mathcal{H}^{\rm node}_{(k,j)}\left( \bm{\alpha} \right).  
	\label{eq:cell_entropy}  
\end{align}  
Minimizing \( \mathcal{H}_{(k)}^{\rm cell} \left( \bm{\alpha} \right) \) encourages the cell $k$ toward a sparser structure, thereby the derivation of discrete architectures of the super-network is simplified. To balance the model performance and sparsity, an overall loss function for the super-network is defined as
\begin{align}  
	\mathcal{L}_{\rm All}\left( \bm{\theta}, \bm{\alpha} \right) = \mathcal{L}_{\rm CE}\left( \bm{\theta}, \bm{\alpha};\mathcal{D}_{\rm train} \right) + \sum_{k=1}^{L} \lambda_{k} \mathcal{H}_{(k)}^{\rm cell} \left( \bm{\alpha} \right) ,  
	\label{eq:overall_loss}  
\end{align}  
where \( \lambda_{k} \ (k=1,...,N) \) are  balance coefficients which control the trade-off between model performance \( \mathcal{L}_{\rm CE} \left( \bm{\theta}, \bm{\alpha};\mathcal{D}_{\rm train} \right) \) and sparsity entropy \( \sum_{k=1}^{L} \lambda_{k} \mathcal{H}_{(k)}^{\rm cell} \left( \bm{\alpha} \right) \) of all cells. Notably, each \( \lambda_{k} \) is an adaptive hyper-parameter adjusted based on the difference between the expected and actual shrinking speeds of the super-network as discussed below.  

\paragraph{Feedback-based Adaptive Coefficient Adjustment} The ESS framework incorporates a feedback mechanism inspired by control theory to dynamically adjust \( \lambda_{k} \). Specifically, at each training step \( t \), we compute the sparsity entropy \( E^{t}_{k} = \mathcal{H}^{\rm cell}_{(k)}(\bm{\alpha}) \) for the \( k \)th cell. If the actual entropy reduction \( E_{t-1}^{k} - E_{t}^{k} \) is less than the expected reduction \( \Delta E \), it indicates that the regularization effect of the sparsity entropy is insufficient. In this case, the adaptive parameter $\lambda_{k}$ is enlarged by $\lambda_{k} \leftarrow c_{1} \lambda_{k}$ with $c_{1} > 1$ (e.g., $c_{1} = 1.05$). Otherwise, $\lambda_{k}$ is decreased by $\lambda_{k} \leftarrow c_{2} \lambda_{k}$ with $0 < c_{2} < 1$ (e.g., $c_{2} = 0.95$). This feedback-based adjustment ensures that the super-network shrinks at a controlled and adaptive rate.  

\paragraph{Super-network Warm-up Phase}
To decouple the co-adaptation of super-network training and architectural search, FX-DARTS employs a bifurcated approach consisting of two sequential stages: 1) a warm-up phase and 2) an architectural optimization phase. The warm-up phase primarily focuses on initializing and progressively training the super-network's model parameters while deliberately excluding the sparsity loss term from the optimization objective. During this preliminary stage, both the network parameters $\bm{\theta}$ and architectural weights $\bm{\alpha}$ are jointly optimized through gradient descent over $T_{\rm warm}$ epochs, with the exclusive objective of minimizing the cross-entropy loss defined in \eqref{eq:ce}. This strategic separation enables the super-network to establish stable feature representations before subsequent architectural optimization.

\paragraph{Architectural Optimization Phase}
After completing the warm-up phase, the architectural parameters \(\bm{\alpha}\) are optimized over \( T_{\rm search} - T_{\rm warm}\) epochs by minimizing the overall loss \eqref{eq:overall_loss}. Specifically, this phase which ensures the algorithm systematically evaluates and prioritizes candidate operations based on their contributions to the network performance, ultimately leads to the sparsity of the super-network.

\paragraph{Expected Entropy Reduction} It is easy to control the shrinking speed of the super-network with the ESS algorithm. When $\Delta E$ is small, more attentions will be paid to the minimization of the CE loss, and it takes a longer time for the super-network to become sparse. When $\Delta E$ is large, the super-network tends to be sparse in a shorter time. In particular, during the training of the super-network, a warm-up phase is applied for the first $T_{\rm warm}$ epochs. During this phase, only the model parameters $\theta$ are updated. Particularly, we intuitively initialize the expected entropy reduction $\Delta E$ as
\begin{align}
	\Delta E \leftarrow \frac{\sum_{k=1}^{L} \mathcal{L}_{\rm Entropy}^{\rm init}(\bm{\alpha_{k}}) }{L{\rm len}({train\_loader}) T_{\rm search} R_{\rm init} },
	\label{eq:delta_entropy}
\end{align}
where $\mathcal{L}_{\rm Entropy}^{\rm init}\left( \bm{\alpha}_{k} \right)$ denotes the initial value of the sparsity entropy of cell $k$ when the optimization begins. Specifically, this formulation establishes an entropy reduction budget per training step that ensures controlled architecture shrinkage. The denominator accounts for: 1) $L$ cells which need independent entropy control; 2) training steps per epoch ${\rm len}(train\_loader)$; 3) total search duration $T_{\rm search} \times R_{\rm init}$. 
\begin{algorithm}[h]
	\caption{The dynamic discretization of the super-network}
	\label{alg:discretization}
	\textbf{Input: }{The super-network $f_{(\bm{\alpha}, \bm{\theta})}\left(\textbf{x}\right)$, and a sufficiently small threshold $\epsilon$ (e.g., $0.02$ in experiments)}. \\
	\textbf{Begin: }	
	\begin{algorithmic}[1]
		\FOR{$k\in [1,...,L], j\in [3,...,N-1], i\in [1,...,j-1]$, and $o\in \mathcal{O}$}
		\STATE{Obtain the contribution weight $a_{(k,i,j)}^{o}$ for operator $o_{(k,i,j)}$ by \eqref{eq:contribution_weights}.}
		\IF{$a_{(k,i,j)}^{o} < \epsilon$}
		\STATE{Delete $o_{(k,i,j)}$.}
		\ENDIF
		\ENDFOR	
	\end{algorithmic}
\end{algorithm}
\paragraph{Cyclic Parameter Reinitialization} To further mitigate the co-adaptation phenomenon where architecture parameters $\bm{\alpha}$ and model parameters $\bm{\theta}$ are optimized jointly or alternately, FX-DARTS introduces cyclic parameter reinitialization to decouple their optimization dynamics. As delineated in Algorithm \ref{alg:ESS}, this process involves $R_{\rm init}$ independent search rounds where $\bm{\theta}$ is reinitialized at each round while $\bm{\alpha}$ accumulates architectural knowledge across rounds. 

\paragraph{Discretization of the Super-network} As the super-network evolves towards a sparser structure, operators with contribution weights below a predetermined threshold are systematically pruned. This pruning strategy ensures that the removal of such operators has minimal impact on the overall performance of the super-network. The discretization process for the super-network is outlined in Algorithm \ref{alg:discretization}. Notably, this discretization approach does not enforce rigid prior constraints on the resulting discrete architectures. Consequently, a diverse set of architectures, characterized by flexible structures and varying computational complexities, can be efficiently derived within a single search procedure.

\subsection{Convergence analysis of the sparsity entropy}
In this part, we theoretically show that under some assumptions, the monotonic decrement of $\mathcal{H}_{(k)}$ can be achieved for each optimization step whenever $\mathcal{H}_{(k)} > 0$. To simplify the discussion, we assume there is only one neural cell in the super-network, such that $\mathcal{H}^{\rm cell} \equiv \mathcal{H}_{(k)}^{\rm cell}$, $\lambda \equiv \lambda_k$ and $\mathcal{H}^{\rm node}_{(j)} \equiv \mathcal{H}^{\rm node}_{(k,j)}$. Correspondingly, the optimization objective is $\mathcal{L}_{\rm All} = \mathcal{L}_{\rm ce} + \lambda \mathcal{H}^{\rm cell}$.

\begin{remark}
	The node-wise sparsity entropy $\mathcal{H}^{\rm node}_{(j)}$ exhibits the following key properties:
	\begin{enumerate}
		\item Non-negativity: $\mathcal{H}^{\rm node}_{(j)} \geq 0$.
		\item Minimum value: When the node-wise distribution $P_{j} = \{ a_{(i,j)} \mid i \in \mathcal{N}(j), o \in \mathcal{O} \}$ is one-hot, $\mathcal{H}^{\rm node}_{(j)}$ attains its minimum value of $0$.
		\item Maximum value: When $P_{j}$ is a uniform distribution, $\mathcal{H}^{\rm node}_{(j)}$ reaches its maximum value of $\log \left( |\mathcal{N}(j)| \cdot |\mathcal{O}| \right)$.
	\end{enumerate}
\end{remark}

\begin{proof}
	\
	\begin{enumerate}
		\item 
		Since $a_{(i,j)}^o \in [0,1]$ and $-\log a_{(i,j)}^o \geq 0$ for $a_{(i,j)}^o \in (0,1]$, it follows that $\mathcal{H}^{\rm node}_{(j)} \geq 0$.
		\item 
		Suppose $\mathcal{H}^{\rm node}_{(j)} = 0$. It occurs if and only if one term $a_{(i,j)}^o = 1$ (others are $0$), i.e., $P_j$ is one-hot. Conversely, if $P_j$ is one-hot, then $\mathcal{H}^{\rm node}_{(j)} = 0$.
		\item 
		The maximum entropy is achieved when $P_j$ is a uniform distribution. To see this, let $M = |\mathcal{N}(j)| \cdot |\mathcal{O}|$. Using Lagrange multipliers, we have
		\[
		\max_{\{a_{(i,j)}^o\}} \left( -\sum_{i,o} a_{(i,j)}^o \log a_{(i,j)}^o \right) \quad \text{s.t.} \quad \sum_{i,o} a_{(i,j)}^o = 1.
		\]
		The Lagrangian is
		\[
		\mathcal{L} = -\sum_{i,o} a_{(i,j)}^o \log a_{(i,j)}^o + \lambda \left( \sum_{i,o} a_{(i,j)}^o - 1 \right).
		\]
		By setting $\frac{\partial \mathcal{L}}{\partial a_{(i,j)}^o} = -\log a_{(i,j)}^o - 1 + \lambda = 0$, we get $a_{(i,j)}^o = e^{\lambda-1}$. Since $\sum_{i} a_{(i,j)} = 1$, we have $a_{(i,j)}^o = 1/M$. By substituting it back to $\mathcal{H}_{(j)}^{\rm node}$, we derive
		\[
		\mathcal{H}^{\rm node}_{(j)} = -M \cdot \frac{1}{M} \log \frac{1}{M} = \log M.
		\]
	\end{enumerate}
\end{proof}

\begin{corollary}
	The gradient $\nabla_{\alpha} \mathcal{H}_{(j)}^{\rm node} = \left[ \frac{\partial \mathcal{H}_{(j)}^{\rm node}}{\partial \alpha_{(i,j)}^o} \right]$ satisfies
	\begin{align}
		\frac{\partial \mathcal{H}_{(j)}^{\rm node}}{\partial \alpha_{(i,j)}^o} = -a_{(i,j)}^o \left( \log a_{(i,j)}^o + \mathcal{H}^{\rm node}_{(j)}(\alpha) \right).
	\end{align}
	Specifically, when $P_{j}$ is not one-hot, $\nabla_{\alpha} \mathcal{H}_{(j)}^{\rm node}$ is non-zero. This implies that $\nabla_{\alpha} \mathcal{H}_{(j)}^{\rm node} = 0$ is achieved when $\mathcal{H}_{(j)}^{\rm node} = 0$ only. Furthermore, if $P_{j}$ for $2 < j < N$ are not all one-hot, then $\nabla_{\alpha} \mathcal{H}^{\rm cell}$ is non-zero, which ensures that $\|\nabla_{\alpha} \mathcal{H}^{\rm cell}\| > 0$.
	\label{remark:2}
\end{corollary}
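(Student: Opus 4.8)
The plan is to observe that, under the node-wise normalization in \eqref{eq:contribution_weights}, the weights $\{a_{(i,j)}^o : i\in\mathcal{N}(j),\, o\in\mathcal{O}\}$ are exactly a softmax distribution over the flattened index $(i,o)$ with logits $\alpha_{(i,j)}^o$, so $\mathcal{H}_{(j)}^{\rm node}$ is simply the Shannon entropy of that softmax. The gradient identity is then the standard entropy-of-softmax derivative. Treating $(i,o)$ as a single index $k$ with logit $z_k=\alpha_{(i,j)}^o$ and normalizer $Z=\sum_k \exp(z_k)$, I would write $\log a_k = z_k - \log Z$, use the softmax Jacobian $\partial a_k/\partial z_m = a_k(\delta_{km}-a_m)$, and exploit the zero-sum identity $\sum_k \partial a_k/\partial z_m = 0$. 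Differentiating $\mathcal{H}=-\sum_k a_k\log a_k$ gives $\partial \mathcal{H}/\partial z_m = -\sum_k (\partial a_k/\partial z_m)(\log a_k + 1)$; the constant term drops by the zero-sum identity, and substituting the Jacobian yields $\partial \mathcal{H}/\partial z_m = -a_m\log a_m + a_m\sum_k a_k\log a_k = -a_m(\log a_m + \mathcal{H})$, which is the claimed formula. This first step is routine softmax calculus and presents no difficulty.

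The second step is to analyze the zero set of the gradient. Since the softmax assigns strictly positive mass to every index whenever the logits are finite, $a_{(i,j)}^o>0$, so a single component vanishes if and only if $\log a_{(i,j)}^o = -\mathcal{H}_{(j)}^{\rm node}$, i.e. $a_{(i,j)}^o = e^{-\mathcal{H}_{(j)}^{\rm node}}$. For the \emph{entire} node gradient to vanish, all $M=|\mathcal{N}(j)|\cdot|\mathcal{O}|$ components must equal the common value $e^{-\mathcal{H}}$; being equal and summing to one, this forces $a_{(i,j)}^o=1/M$, i.e. the uniform distribution, at which $\mathcal{H}_{(j)}^{\rm node}=\log M$. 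For the cell-level claim I would note that $\mathcal{H}^{\rm cell}=\sum_{2<j<N}\mathcal{H}_{(j)}^{\rm node}$ decomposes over disjoint parameter blocks, since each $\alpha_{(i,j)}^o$ enters only the node-$j$ term (the blocks are indexed by the target node $j$). Hence $\nabla_\alpha \mathcal{H}^{\rm cell}$ is the concatenation of the per-node gradients and $\|\nabla_\alpha\mathcal{H}^{\rm cell}\|^2=\sum_j\|\nabla_\alpha\mathcal{H}_{(j)}^{\rm node}\|^2$, which is positive as soon as a single node block is non-zero.

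The main obstacle, and a point requiring care, is the implication ``gradient $=0 \Rightarrow \mathcal{H}=0$''. The computation above shows that for finite $\alpha$ the unique interior critical point of $\mathcal{H}_{(j)}^{\rm node}$ is the uniform distribution (the entropy \emph{maximum}, $\mathcal{H}=\log M$), not the one-hot configuration; the one-hot state with $\mathcal{H}=0$ is attained only in the limit $\alpha\to\infty$, where the gradient likewise tends to zero because $a\log a\to 0$. Thus, read literally, the gradient also vanishes at the uniform point while $\mathcal{H}>0$, so the statement as written is not exact. I would reconcile this in one of two ways. First, restrict the conclusion to the regime actually needed for the convergence analysis: $\nabla_\alpha\mathcal{H}_{(j)}^{\rm node}\neq 0$ whenever $P_j$ is \emph{neither uniform nor one-hot}, so that outside a single unstable equilibrium the entropy term supplies a strict descent direction. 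Second, observe that the uniform distribution is the global maximum of the entropy, hence an unstable equilibrium of the shrinking dynamics on $\mathcal{L}_{\rm All}$, from which the cross-entropy gradient and any asymmetry in initialization drive the iterates away. Under the first reading, the cell-level conclusion $\|\nabla_\alpha\mathcal{H}^{\rm cell}\|>0$ follows immediately from the block-diagonal decomposition whenever at least one node distribution is non-uniform and non-degenerate, which is precisely the operating regime of \eqref{eq:overall_loss}.
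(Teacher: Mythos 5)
Your derivation of the gradient identity is exactly the paper's: both treat $\{a_{(i,j)}^o\}$ as a softmax over the flattened index $(i,o)$, differentiate $-\sum a\log a$ through the softmax Jacobian, and use the normalization $\sum_{i',o'} a_{(i',j)}^{o'}=1$ to collapse the extra terms into $-a_{(i,j)}^o\bigl(\log a_{(i,j)}^o + \mathcal{H}^{\rm node}_{(j)}\bigr)$. That part matches and is unproblematic, as is your block decomposition of $\nabla_{\alpha}\mathcal{H}^{\rm cell}$ over target nodes, which is the same (implicit) reasoning the paper uses to pass from nodes to the cell.

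Where you diverge is the analysis of the zero set, and your version is the correct one. The paper disposes of this step in one line: ``if $P_j$ is not one-hot, $\mathcal{H}^{\rm node}_{(j)}>0$, so $\nabla_{\alpha}\mathcal{H}^{\rm node}_{(j)}\neq 0$'' --- an inference from positive entropy to nonzero gradient that is invalid. Your exact solution of $\nabla_{\alpha}\mathcal{H}^{\rm node}_{(j)}=0$ (every component must satisfy $a_{(i,j)}^o=e^{-\mathcal{H}}$, forcing the uniform distribution) exhibits the counterexample: at uniform weights $a=1/M$ the entropy is maximal, $\mathcal{H}^{\rm node}_{(j)}=\log M>0$, yet each gradient component is $-\tfrac{1}{M}\bigl(-\log M+\log M\bigr)=0$. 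So the corollary as stated, and the paper's proof of it, are both wrong at this point, and your two repairs (restrict the claim to distributions that are neither uniform nor one-hot, or treat the uniform point as an unstable maximum that the cross-entropy gradient perturbs away from) are the right way to salvage what Theorem~\ref{theorem:guarantee} actually needs, since that theorem invokes this corollary precisely to conclude $\|\nabla_{\bm{\alpha}}\mathcal{H}^{\rm cell}\|\neq 0$ from $\mathcal{H}^{\rm cell}>0$. The gap is not hypothetical: Algorithm~\ref{alg:ESS} initializes all $\alpha_{(k,i,j)}^o=0$, which places every node exactly at the uniform critical point, where the entropy term supplies no descent direction until training breaks the symmetry. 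Your further observation that one-hot distributions (hence $\mathcal{H}^{\rm node}_{(j)}=0$, and the vanishing of the gradient by $a\log a\to 0$) are attained only in the limit of infinite logits is likewise correct and is glossed over by the paper.
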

\begin{proof}
	The contribution weights $a_{(i,j)}^o$ are derived through
	\[
	a_{(i,j)}^o = \frac{\exp(\alpha_{(i,j)}^o)}{\sum_{i',o'} \exp(\alpha_{(i',j)}^{o'})}.
	\]
	The gradient of $\mathcal{H}^{\rm node}_{(j)}$ with respect to $\alpha_{(i,j)}^o$ is
	\[
	\frac{\partial \mathcal{H}^{\rm node}_{(j)}}{\partial \alpha_{(i,j)}^o} = -\left( \log a_{(i,j)}^o + 1 \right) \cdot a_{(i,j)}^o + \sum_{i',o'} a_{(i',j)}^{o'} \log a_{(i',j)}^{o'} \cdot a_{(i,j)}^o.
	\]
	By simplifying the above formulation with $\sum_{i',o'} a_{(i',j)}^{o'} = 1$, we get
	\[
	\frac{\partial \mathcal{H}^{\rm node}_{(j)}}{\partial \alpha_{(i,j)}^o} = -a_{(i,j)}^o \left( \log a_{(i,j)}^o + \mathcal{H}^{\rm node}_{(j)} \right).
	\]
	If $P_j$ is not one-hot, $\mathcal{H}^{\rm node}_{(j)} > 0$, so $\nabla_{\alpha} \mathcal{H}^{\rm node}_{(j)} \neq 0$. Extending the results to a neural cell,  we have $\nabla_{\alpha} \mathcal{H}^{\rm cell} \neq 0$ if any $P_j$ are not one-hot for $ 2 < j < N$.

\end{proof}
\begin{assumption}[Boundedness of $\nabla_{\bm{\alpha}} \mathcal{L}_{\rm CE}$]
At each optimization step, there exits a constant $g > 0$, such that the gradient of the cross-entropy loss with respect to $\bm{\alpha}$ satisfies $\|\nabla_{\bm{\alpha}} \mathcal{L}_{\rm CE}\| \leq g$, where $\|\cdot\|$ denotes the $L_2$ norm.
\label{assumption}
\end{assumption}
\begin{theorem}
	Under the first-order Taylor approximation, when $\mathcal{H}^{\rm cell} > 0$, with a reasonably small learning rate $\eta$ for updating $\bm \alpha$, there exists a $\lambda > - \frac{\|\nabla_{\bm{\alpha}} \mathcal{L}_{\rm CE}\| \cos \theta}{\|\nabla_{\bm{\alpha}} \mathcal{H}^{\rm cell}\|}$ such that the entropy change $\Delta \mathcal{H}^{\rm cell}$ for arbitrary architectural optimization steps is negative.
	\label{theorem:guarantee}
\end{theorem}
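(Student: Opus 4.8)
The plan is to linearize the entropy change produced by a single gradient-descent step on the combined objective $\mathcal{L}_{\rm All} = \mathcal{L}_{\rm CE} + \lambda \mathcal{H}^{\rm cell}$. Since the architecture optimization phase updates only $\bm{\alpha}$, I would write the step as $\bm{\alpha}_{t+1} = \bm{\alpha}_t - \eta \nabla_{\bm{\alpha}} \mathcal{L}_{\rm All} = \bm{\alpha}_t - \eta\left(\nabla_{\bm{\alpha}} \mathcal{L}_{\rm CE} + \lambda \nabla_{\bm{\alpha}} \mathcal{H}^{\rm cell}\right)$ and substitute the increment $\Delta\bm{\alpha} = \bm{\alpha}_{t+1}-\bm{\alpha}_t$ into the first-order Taylor expansion $\Delta\mathcal{H}^{\rm cell} \approx \langle \nabla_{\bm{\alpha}}\mathcal{H}^{\rm cell}, \Delta\bm{\alpha}\rangle$. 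This immediately gives $\Delta\mathcal{H}^{\rm cell} \approx -\eta\left(\langle \nabla_{\bm{\alpha}}\mathcal{H}^{\rm cell}, \nabla_{\bm{\alpha}}\mathcal{L}_{\rm CE}\rangle + \lambda\|\nabla_{\bm{\alpha}}\mathcal{H}^{\rm cell}\|^2\right)$, so the entire claim reduces to a sign analysis of the bracketed quantity.

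Second, I would rewrite the cross term through the angle $\theta$ between the two gradients as $\langle \nabla_{\bm{\alpha}}\mathcal{H}^{\rm cell}, \nabla_{\bm{\alpha}}\mathcal{L}_{\rm CE}\rangle = \|\nabla_{\bm{\alpha}}\mathcal{H}^{\rm cell}\|\,\|\nabla_{\bm{\alpha}}\mathcal{L}_{\rm CE}\|\cos\theta$. Because $\eta>0$, the sign of $\Delta\mathcal{H}^{\rm cell}$ is opposite to that of the bracket, so demanding $\Delta\mathcal{H}^{\rm cell}<0$ is equivalent to $\|\nabla_{\bm{\alpha}}\mathcal{H}^{\rm cell}\|\,\|\nabla_{\bm{\alpha}}\mathcal{L}_{\rm CE}\|\cos\theta + \lambda\|\nabla_{\bm{\alpha}}\mathcal{H}^{\rm cell}\|^2 > 0$. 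Dividing through by $\|\nabla_{\bm{\alpha}}\mathcal{H}^{\rm cell}\|$ then isolates exactly the threshold $\lambda > -\|\nabla_{\bm{\alpha}}\mathcal{L}_{\rm CE}\|\cos\theta / \|\nabla_{\bm{\alpha}}\mathcal{H}^{\rm cell}\|$ appearing in the statement, and any $\lambda$ strictly above it renders the step entropy-decreasing.

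The division performed above is legitimate only when $\|\nabla_{\bm{\alpha}}\mathcal{H}^{\rm cell}\| > 0$, which is precisely where I would invoke the preceding Corollary: the hypothesis $\mathcal{H}^{\rm cell}>0$ forces at least one node distribution $P_j$ to be non-one-hot, and by the Corollary this makes $\nabla_{\bm{\alpha}}\mathcal{H}^{\rm cell}$ non-zero, so the denominator is strictly positive and the threshold is well-defined. Assumption \ref{assumption} then supplies the finite bound $\|\nabla_{\bm{\alpha}}\mathcal{L}_{\rm CE}\|\le g$, which keeps the numerator bounded and hence guarantees that an admissible finite $\lambda$ exists at every step.

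The main obstacle I anticipate is controlling the approximation itself: the linearization discards a second-order remainder of order $\eta^2\|\nabla_{\bm{\alpha}}\mathcal{L}_{\rm All}\|^2$, so to upgrade the "$\approx$" to a rigorous "$<0$" I would bound this remainder and absorb it into the strictly positive linear term by choosing $\eta$ small, which is exactly the content of the \emph{reasonably small learning rate} hypothesis. A secondary subtlety is that the threshold depends on the instantaneous gradients, so establishing negativity for \emph{arbitrary} steps requires the chosen $\lambda$ to dominate the threshold uniformly; I would handle this by combining the bound $g$ from Assumption \ref{assumption} with the non-vanishing denominator from the Corollary, and by noting that whenever $\cos\theta\ge 0$ any positive $\lambda$ already works, so only the $\cos\theta<0$ regime actually constrains the admissible $\lambda$.
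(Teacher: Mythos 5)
Your proposal is correct and follows essentially the same route as the paper's own proof: a gradient step on $\mathcal{L}_{\rm CE} + \lambda \mathcal{H}^{\rm cell}$, a first-order Taylor expansion of $\Delta\mathcal{H}^{\rm cell}$, the $\cos\theta$ decomposition of the cross term, Corollary~\ref{remark:2} to ensure $\|\nabla_{\bm{\alpha}}\mathcal{H}^{\rm cell}\| > 0$, and Assumption~\ref{assumption} to bound the numerator. Your added remarks on bounding the second-order remainder via a small $\eta$ and on the step-dependence of the threshold are refinements the paper leaves implicit in its hypotheses, but they do not change the substance of the argument.
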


\begin{proof}
	In an optimization step, the updating of $\bm{\alpha}$ is given by
	\begin{align}
		\Delta \bm{\alpha} = -\eta \left( \nabla_{\bm{\alpha}} \mathcal{L}_{\rm CE} + \lambda \nabla_{\bm{\alpha}} \mathcal{H}^{\rm cell} \right).
	\end{align}
	Correspondingly, the variation of $\mathcal{H}^{\rm cell}$ can be approximated using a first-order Taylor expansion as
	\begin{align}
		\Delta & \mathcal{H}^{\rm cell} \approx {\nabla_{\bm{\alpha}} \mathcal{H}^{\rm cell}}^\top \Delta \bm{\alpha} \nonumber \\
		&= -\eta {\nabla_{\bm{\alpha}} \mathcal{H}^{\rm cell}}^\top \nabla_{\bm{\alpha}} \mathcal{L}_{\rm CE} - \eta \lambda \|\nabla_{\bm{\alpha}} \mathcal{H}^{\rm cell}\|^2 \nonumber \\
		&= -\eta \|\nabla_{\bm{\alpha}} \mathcal{H}^{\rm cell}\| \|\nabla_{\bm{\alpha}} \mathcal{L}_{\rm CE}\| \cos \theta - \eta \lambda \|\nabla_{\bm{\alpha}} \mathcal{H}^{\rm cell}\|^2,
		\label{eq:entropy}
	\end{align}
	where $\theta$ is the angle between $\nabla_{\bm{\alpha}} \mathcal{H}^{\rm cell}$ and $\nabla_{\bm{\alpha}} \mathcal{L}_{\rm CE}$. Since $\mathcal{H}^{\rm cell} > 0$, from Remark~\ref{remark:2}, we know that $\|\nabla_{\bm{\alpha}} \mathcal{H}^{\rm cell}\| \neq 0$. Additionally, by Assumption~\ref{assumption}, $\|\nabla_{\bm{\alpha}} \mathcal{L}_{\rm CE}\| \leq g$. Therefore, when
	\begin{align}
		\lambda > - \frac{\|\nabla_{\bm{\alpha}} \mathcal{L}_{\rm CE}\| \cos \theta}{\|\nabla_{\bm{\alpha}} \mathcal{H}^{\rm cell}\|},
	\end{align}
	the term $-\eta \|\nabla_{\bm{\alpha}} \mathcal{H}^{\rm cell}\| \|\nabla_{\bm{\alpha}} \mathcal{L}_{\rm CE}\| \cos \theta - \eta \lambda \|\nabla_{\bm{\alpha}} \mathcal{H}^{\rm cell}\|^2$ is guaranteed to be negative.
\end{proof}

\begin{remark}
Specifically, the first term in \eqref{eq:entropy} represents the performance-driven term, which can become positive when the angle $\theta$ between $\nabla_{\bm{\alpha}} \mathcal{H}^{\rm cell}$ and $\nabla_{\bm{\alpha}} \mathcal{L}_{\rm CE}$ exceeds $\pi/2$. 
\end{remark}

\begin{corollary}
	When $\lambda = \frac{\Delta E - \eta \|\nabla_{\bm{\alpha}} \mathcal{H}^{\rm cell}\| \|\nabla_{\bm{\alpha}} \mathcal{L}_{\rm CE}\| \cos \theta}{\eta \|\nabla_{\bm{\alpha}} \mathcal{H}^{\rm cell}\|^2}$, it is guaranteed that $-\Delta \mathcal{H}^{\rm cell} \approx \Delta E$.
	\label{corollary:converge}
\end{corollary}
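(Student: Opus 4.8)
The plan is to obtain this corollary as an immediate algebraic consequence of the first-order expansion already derived in the proof of Theorem~\ref{theorem:guarantee}. There, the single-step entropy change was written as
\begin{align}
	\Delta \mathcal{H}^{\rm cell} \approx -\eta \|\nabla_{\bm{\alpha}} \mathcal{H}^{\rm cell}\| \|\nabla_{\bm{\alpha}} \mathcal{L}_{\rm CE}\| \cos \theta - \eta \lambda \|\nabla_{\bm{\alpha}} \mathcal{H}^{\rm cell}\|^2, \nonumber
\end{align}
which is affine in the balance coefficient $\lambda$. The strategy is therefore to treat this as an equation in $\lambda$ and solve for the value that drives $\Delta \mathcal{H}^{\rm cell}$ to the prescribed target $-\Delta E$.

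Concretely, I would impose $\Delta \mathcal{H}^{\rm cell} = -\Delta E$, move the performance-driven term $-\eta \|\nabla_{\bm{\alpha}} \mathcal{H}^{\rm cell}\| \|\nabla_{\bm{\alpha}} \mathcal{L}_{\rm CE}\| \cos \theta$ to the other side, and divide through by $\eta \|\nabla_{\bm{\alpha}} \mathcal{H}^{\rm cell}\|^2$; this isolates precisely the closed-form $\lambda$ stated in the corollary. Reading the same manipulation in reverse, substituting this $\lambda$ back into the expansion cancels the performance-driven contribution against the matching numerator term and leaves $\Delta \mathcal{H}^{\rm cell} \approx -\Delta E$, equivalently $-\Delta \mathcal{H}^{\rm cell} \approx \Delta E$, as claimed.

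Because the whole derivation is a single substitution, I anticipate no genuine analytical difficulty; the one subtlety worth flagging is the well-definedness of the closed-form $\lambda$, whose denominator $\eta \|\nabla_{\bm{\alpha}} \mathcal{H}^{\rm cell}\|^2$ must be nonzero. This is exactly guaranteed by Corollary~\ref{remark:2}, which shows $\|\nabla_{\bm{\alpha}} \mathcal{H}^{\rm cell}\| > 0$ whenever the cell is not fully one-hot, i.e.\ whenever $\mathcal{H}^{\rm cell} > 0$, the regime inherited from Theorem~\ref{theorem:guarantee}. Hence $\lambda$ is finite and the construction is valid. I would also emphasize that the equality is necessarily approximate, as it rests on the first-order Taylor model, with the second-order remainder being the sole source of error in the stated relation.
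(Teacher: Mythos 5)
Your proposal is correct and follows essentially the same route as the paper: both set the first-order expression for $-\Delta \mathcal{H}^{\rm cell}$ equal to $\Delta E$ and solve the resulting affine equation for $\lambda$. Your added remark on well-definedness of the denominator via Corollary~\ref{remark:2} is a small but sound refinement the paper leaves implicit.
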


\begin{proof}
	To achieve $-\Delta \mathcal{H}^{\rm cell} \approx \Delta E$,
	let 
	\begin{align}
	-\Delta \mathcal{H}^{\rm cell} &= \eta \|\nabla_{\bm{\alpha}} \mathcal{H}^{\rm cell}\| \|\nabla_{\bm{\alpha}} \mathcal{L}_{\rm CE}\| \cos \theta + \eta \lambda  \|\nabla_{\bm{\alpha}} \mathcal{H}^{\rm cell}\|^2  
	\nonumber
	\\ & = \Delta E.
	\end{align}
	By solving this equation for $\lambda$, we obtain
	\[
	\lambda = \frac{\Delta E - \eta \|\nabla_{\bm{\alpha}} \mathcal{H}^{\rm cell}\| \|\nabla_{\bm{\alpha}} \mathcal{L}_{\rm CE}\| \cos \theta}{\eta \|\nabla_{\bm{\alpha}} \mathcal{H}^{\rm cell}\|^2}.
	\]
\end{proof}

\begin{remark}
	Given a predefined minimum entropy $\mathcal{H}_{\rm min}^{\rm cell} > 0$ during the search, by using a sufficiently small learning rate $\eta$ for updating $\bm{\alpha}$ and a small expected entropy reduction $\Delta E > 0$, the sparsity entropy $\mathcal{H}^{\rm cell}$ can be monotonically reduced to $\mathcal{H}_{\rm min}^{\rm cell}$ by appropriately adjusting $\lambda$, as outlined in Theorem~\ref{theorem:guarantee} and Corollary~\ref{corollary:converge}. While the exact value of $\lambda$ can be computed at each optimization step, $\lambda$ is adaptively adjusted by ESS based on the actual entropy reduction $E_{t-1} - E_t$, where $E_{t} = \mathcal{H}^{\rm cell}\left({\bm{\alpha}}\right)$ at step $t$, and the target decrement $\Delta E$. The adaptive adjustment allows more flexibility in entropy-based regularization. In expectation, the feedback-based adaptive adjustment ensures that $\lambda$ fluctuates around its exact value.
\end{remark}


\begin{figure*}[htb]
	\subfloat[\tiny FX-DARTS-$O_{1}$ (48E)]{
		\includegraphics[height=0.29\textheight]{architectures/genotype_large_48epoch_S1}
		\label{fig:genotype_48epoch_S1}
	}
	\subfloat[\tiny FX-DARTS-$O_{1}$ (64E)]{
		\includegraphics[height=0.29\textheight]{architectures/genotype_large_64epoch_S1}
		\label{fig:genotype_64epoch_S1}
	}
	\subfloat[\tiny FX-DARTS-$O_{2}$ (48E)]{
		\includegraphics[height=0.29\textheight]{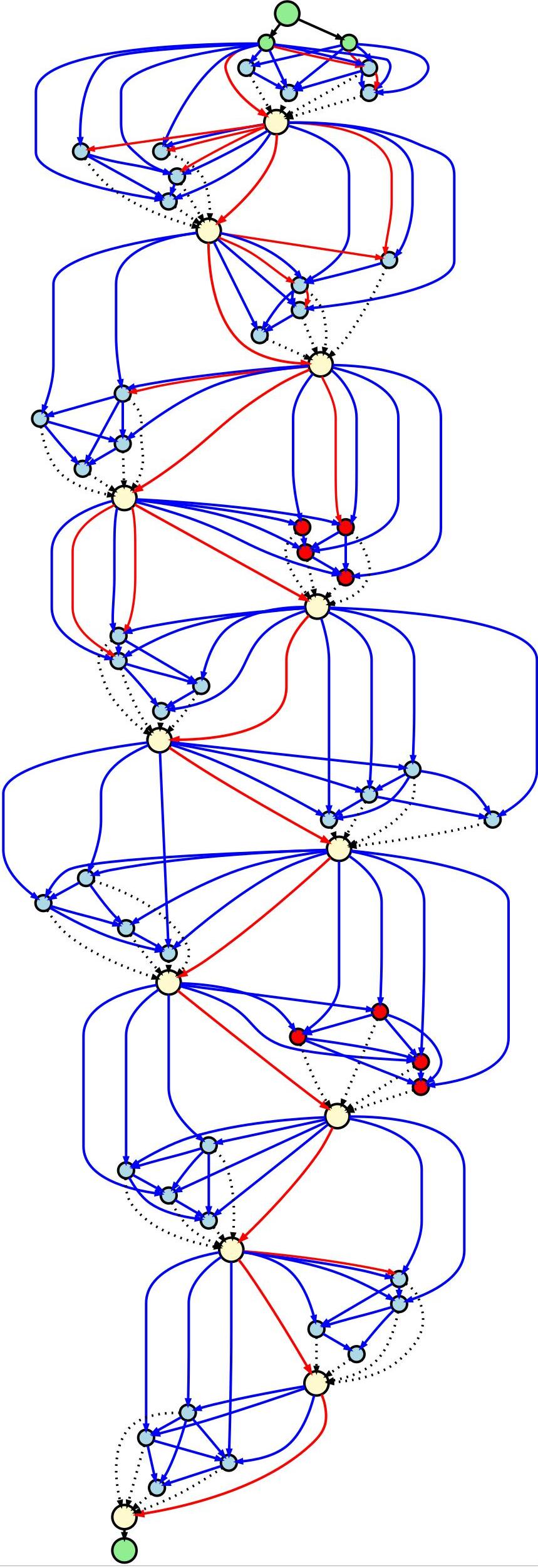}
		\label{fig:genotype_48epoch_S2}
	}
	\subfloat[\tiny FX-DARTS-$O_{2}$ (64E)]{
		\includegraphics[height=0.29\textheight]{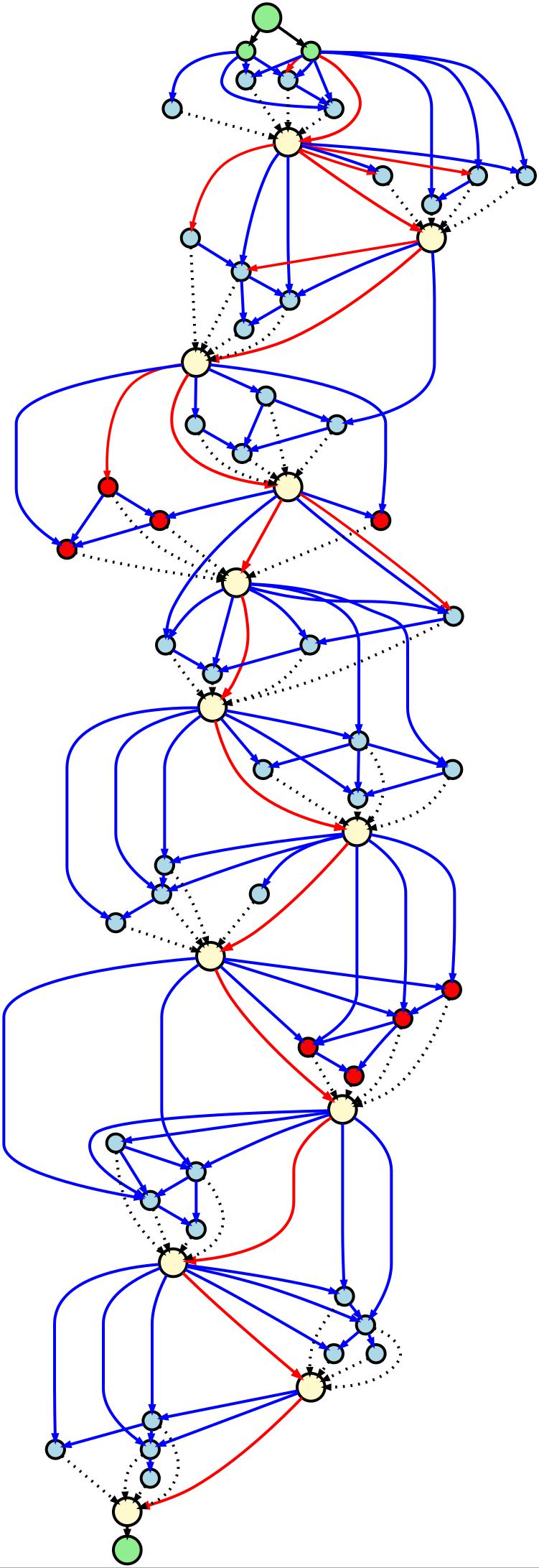}
		\label{fig:genotype_64epoch_S2}
	}
	\subfloat[\tiny FX-DARTS-$O_{3}$ (48E)]{
		\includegraphics[height=0.29\textheight]{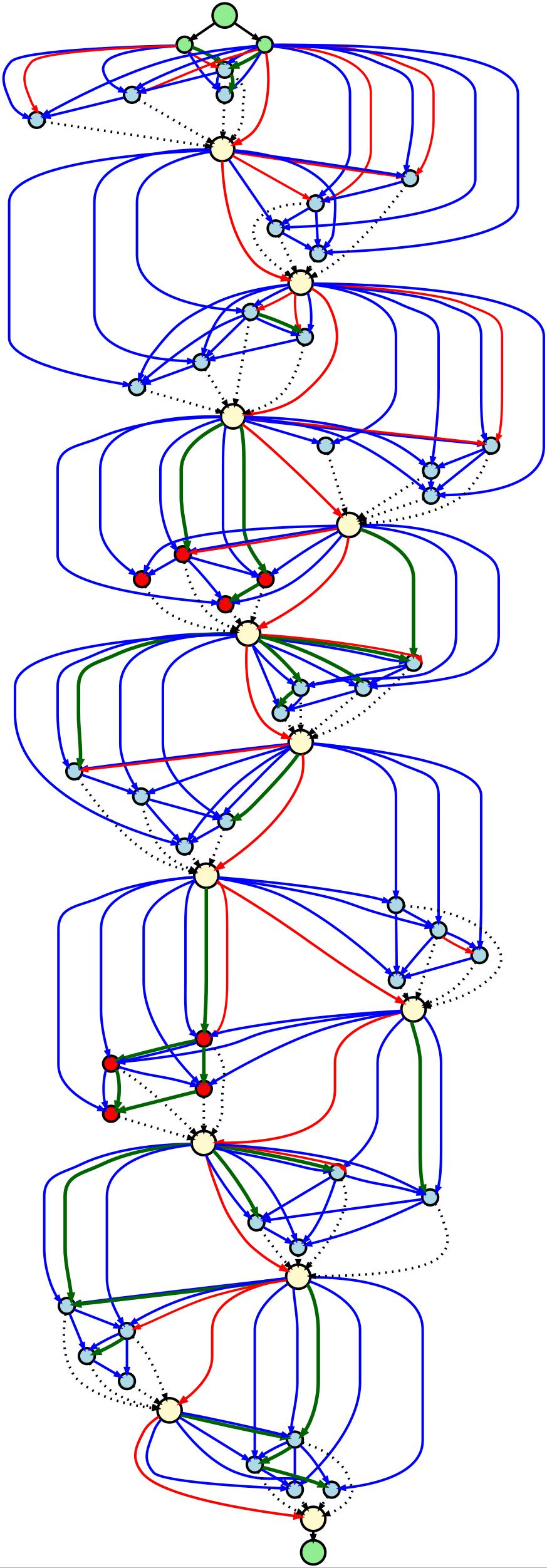}
		\label{fig:genotype_48epoch_S3}
	}
	\subfloat[\tiny FX-DARTS-$O_{3}$ (64E)]{
		\includegraphics[height=0.29\textheight]{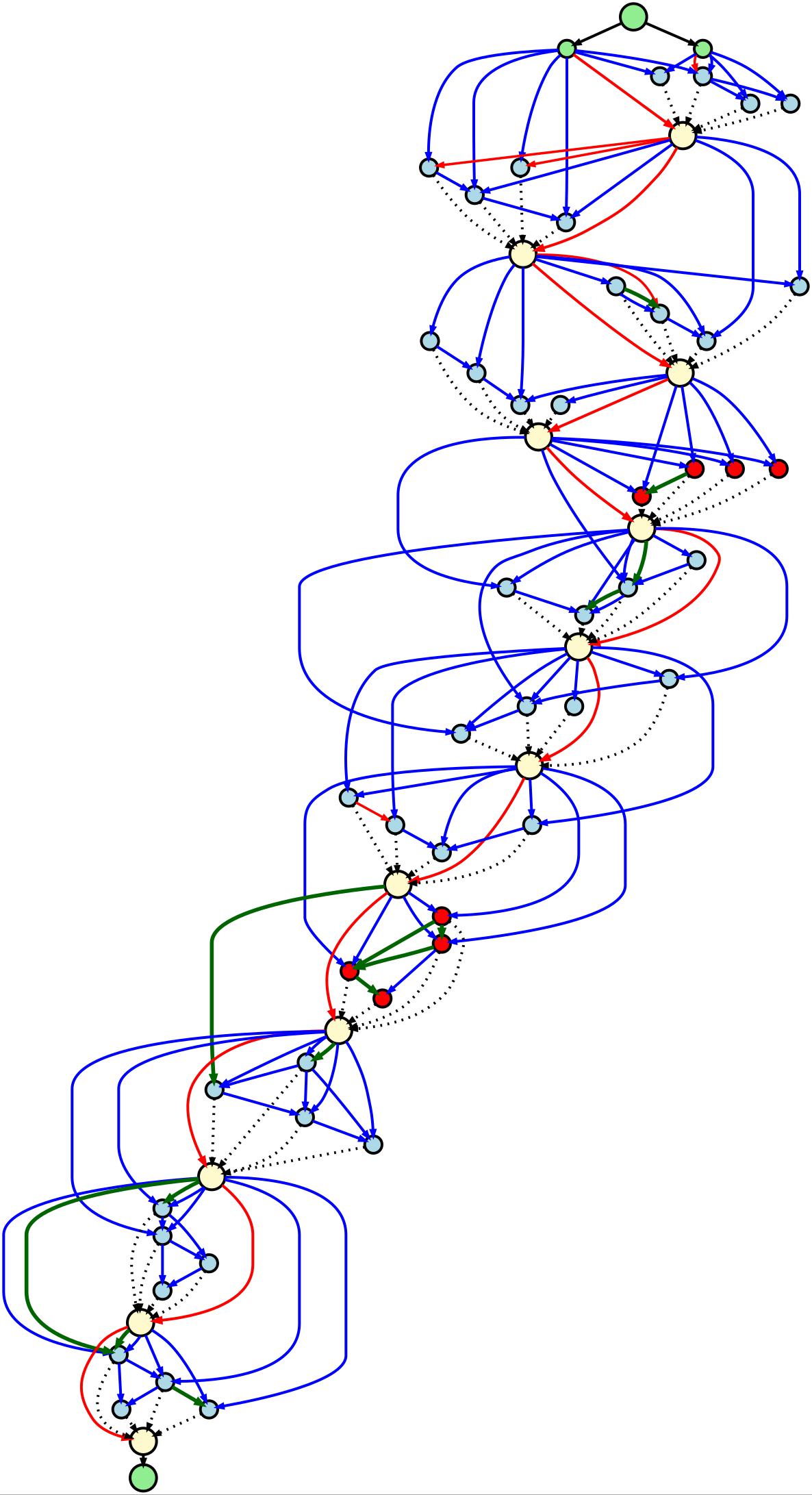}
		\label{fig:genotype_64epoch_S3}
	}
	
	\caption{The networks searched by FX-DARTS on CIFAR-10. \textcolor{lightlightgreen}{Green} nodes are the input, pre-processed tensors and the output.  \textcolor{lightblue}{Light-blue} nodes and \textcolor{darkred}{dark-red} nodes are the intermediate nodes for normal cells and reduction cells, respectively.
		\textcolor{gray}{Grey} dashed lines indicate the concatenation of feature tensors at channel dimension. In particular, \textcolor{red}{red} directed lines are skip-connections, \textcolor{blue}{blue} directed lines are $3\times3$ depthwise separable convolutions, 
		and \textcolor{lightgreen}{green} directed lines are $3\times3$ dilated convolutions.
	}
	\label{fig:network_cifar}
\end{figure*}

\subsection{Architecture search on CIFAR-10/100}
In this section, we present the implementation details of the proposed FX-DARTS on CIFAR-10 classification and conduct comprehensive comparisons with popular NAS architectures on both CIFAR-10 and CIFAR-100 classification tasks. 
\subsubsection{Hyper-parameter settings}\paragraph{Architecture search} For architecture search, we construct the training dataset $\mathcal{D}_{\rm train}$ by randomly selecting 90\% of samples from the CIFAR-10 training set, which is processed in mini-batches of 128 samples. The data augmentation pipeline includes the following transformations: Randomly cropping and resizing of $32\times 32$ with the area ranging from 20\% to 100\% of the original, horizontal flipping with the probability of 50\%, color adjusting (brightness, contrast, saturation, and hue) with the probability of 80\%, grayscale transformation with the probability of 20\%, and CIFAR-10 normalization. We try three different operator spaces, which are
\begin{itemize}
	\item $\mathcal{O}_{1}$: {skip-connect};
	\item $\mathcal{O}_{2}$: {skip-connect, $3\times3$ sep-conv};
	\item $\mathcal{O}_{3}$: {skip-connect, $3\times3$ sep-conv, $5\times5$ dil-conv}.
\end{itemize}
Particularly, skip-connect means the residual connection, sep-conv means the depth-wise separable convolution, and dil-conv means the dilated convolution. The parameters of these operators are initialized by PyTorch's default settings. The super-network consists of $L=12$ cells, and each cell contains $N = 6$ nodes. The super-network architecture comprises $L=12$ cells, with each cell containing $N=6$ nodes. Architectural parameters are initialized to zero, and we employ shrinking coefficients $c_{1}=1.05$ and $c_{2}=0.95$. The initial value of $\lambda_{k}$ is set to $10^{-4}$ for all $k \in [1,...,L]$. The search process is configured with $T_{\rm search}=16$ epochs and $T_{\rm warm}={T_{\rm search}}/{2}$ warm-up epochs. With $R_{\rm init}=5$ rounds of model parameter reinitialization, the total NAS procedure requires $R_{\rm init} \times T_{\rm search}$ epochs. For optimization, the model parameters $\bm{\theta}$ are optimized by the Adam optimizer with a constant learning rate of $10^{-3}$ and a weight decay of $10^{-4}$. The architectural parameters $\bm{\alpha}$ are also optimized by the Adam optimizer, but with a learning rate of $10^{-2}$ and zero weight decay. The expected entropy $\Delta E$ is calculated by \eqref{eq:delta_entropy}, and the discretization threshold $\epsilon$ in Algorithm \ref{alg:discretization} is 0.02.   

\paragraph{Architecture evaluation} As highlighted in prior research, the evaluation outcomes of architectures derived from the DARTS search space are significantly affected by various hyperparameters, including data augmentation techniques \cite{DBLP:conf/iclr/YangEC20}. This observation has prompted us to eliminate the use of drop-path, a technique commonly employed in DARTS-like methodologies, to ensure that the evaluation results more accurately reflect the intrinsic qualities of the architectures themselves. Specifically, we have re-evaluated the architectures reported in prominent NAS literature under uniform training conditions to facilitate equitable comparisons. Owing to the adaptive nature of the ESS framework, FX-DARTS retrains a greater number of operators within discrete architectures under identical cell number configurations. Consequently, we have adjusted the number of cells to 17 for other architectures to maintain parity in comparisons. For both CIFAR-100 and CIFAR-10 evaluations, the initial number of network channels is established to be 20, with a training duration of 150 epochs. The optimization of model parameters is conducted using an SGD optimizer, characterized by an initial learning rate of 0.05, a weight decay of \(3 \times 10^{-4}\), gradient clipping at 5.0, a batch size of 128, and a momentum of 0.9. The learning rate is gradually reduced to zero following a cosine scheduling strategy throughout the training phase. Image augmentation is performed using cutout with dimensions of \(16 \times 16\). To mitigate the impact of randomness, each architecture is evaluated three times. Means and the standard deviations are reported. 

\subsubsection{Results on CIFAR-10/100}
\begin{table*}[htb]
	\centering
	\caption{Comparison results of NAS architectures on CIFAR-10/100 classification}
	\scalebox{0.9}{
	\begin{tabular}{lccccccccccc}
		\toprule     
		{Architecture} & Params (M) & FLOPs (M) & \multicolumn{4}{c}{CIFAR-100 Test Acc. (\%)} &  \multicolumn{4}{c}{CIFAR-10 Test Acc. (\%)} & Search Cost \\
		\cmidrule(lr){4-7} \cmidrule(lr){8-11}
		 &  &  & \# 1 & \# 2 & \# 3 & Avg. & \# 1 & \# 2 & \# 3 & Avg. & (GPU-Time / GPU-Mem) \\
		\midrule
		ResNet-32 \cite{DBLP:conf/cvpr/HeZRS16} & 0.46 & 69 & 71.56 & 72.13 & 72.01 & 71.90$\pm$0.30 & 94.18 & 94.11 & 94.00 & 94.10$\pm$0.09 & - \\
		ResNet-56 \cite{DBLP:conf/cvpr/HeZRS16} & 0.85 & 127 & 73.98 & 74.19 & 74.12 & 74.10$\pm$0.11 & 95.14 & 95.10 & 94.98 & 95.07$\pm$0.08 & - \\
		ResNet-110 \cite{DBLP:conf/cvpr/HeZRS16} & 1.73 & 255 & 75.42 & 75.99 & 75.25 & 75.55$\pm$0.39 & 95.51 & 95.60 & 95.54 & 95.55$\pm$0.05 & - \\
		\midrule
		DARTS-$\rm 1st$ \cite{liu2018darts} & 0.85 & 144 & 76.40 & 76.33 & 75.63 & 76.12$\pm$0.43 & 95.60 & 95.61 & 95.71 & 95.64$\pm$0.06 & 1.5 GPU-days / -  \\
		DARTS-$\rm 2nd$ \cite{liu2018darts} &  0.90 & 152 & 76.87 & 76.64 & 76.45 & 76.65$\pm$0.21 & 95.76 & 95.54 & 95.80 & 95.70$\pm$0.14 & 4.0 GPU-days / - \\
		PC-DARTS \cite{xu2019pc} &  1.02 & 165 & 76.68 & 76.68 & 76.49 & 76.62$\pm$0.11 & 95.85 & 95.91 & 95.99 & 95.92$\pm$0.07 & 0.1 GPU-days / - \\
		P-DARTS \cite{chen2021progressive} & 0.95 & 156 & 76.87 & 77.20 & 77.37 & 77.15$\pm$0.25 & 95.55 & 95.47 & 95.70 & 95.57$\pm$0.12 & 0.1 GPU-days / -\\
		RelativeNAS \cite{tan2021relativenas} & 1.09 & 183 & 76.51 & 76.58 & 76.02 & 76.37$\pm$0.30 & 96.00 & 95.94 & 95.95 & 95.96$\pm$0.03  & 0.4 GPU-days / - \\
		SNAS-mild \cite{xie2018snas} &  0.72 & 120 & 76.04 & 76.21 & 76.70 & 76.32$\pm$0.34 & 95.55 & 95.36 & 95.61 & 95.51$\pm$0.13 & 1.5 GPU-days / - \\
		XNAS \cite{DBLP:conf/nips/NaymanNRFJZ19} & 1.06 & 174 & 77.27 & 77.33 & 77.23 & 77.28$\pm$0.05 & 95.89 & 96.02 & 95.75 & 95.89$\pm$0.14 & 0.3 GPU-days / - \\
		$\beta$-DARTS \cite{DBLP:conf/cvpr/YeL00FO22} & 1.03 & 166 & 76.93 & 76.47 & 77.02 & 76.81$\pm$0.30 & 95.87 & 95.74 & 95.54 & 95.72$\pm$0.17 & 0.4 GPU-days / - \\
		CDARTS \cite{yu2022cyclic} & 1.09 & 173 & 77.43 & 77.12 & 77.02 & 77.19$\pm$0.21 & 95.58 & 95.55 & 96.04 & 95.72$\pm$0.27 & - \\
		Noisy-DARTS \cite{chu2020noisy} & 0.95 & 120 & 76.09 & 76.26 & 76.82 & 76.39$\pm$0.38 & 95.56 & 95.50 & 95.35 & 95.47$\pm$0.11 & 0.4 GPU-days / - \\
		NASNet \cite{baker2016designing} & 1.07 & 180 & 77.33 & 77.36 & 77.45 & 77.38$\pm$0.06 & 95.83 & 95.86 & 96.04 & 95.91$\pm$0.11 & 1800 GPU-days / - \\
		AmoebaNet \cite{DBLP:conf/aaai/RealAHL19} & 0.90 & 147 & 77.29 & 76.52 & 76.92 & 76.91$\pm$0.39 & 95.82 & 95.77 & 95.78 & 95.79$\pm$0.03 & 3150 GPU-days / - \\
		\midrule
		FX-DARTS-$\mathcal{O}_{1}$ (48E) & 1.06 & 165 & 78.46 & 78.20 & 78.31 & 78.32$\pm$0.13 & 95.88 & 95.93 & 96.03 & 95.95$\pm$0.01 & 1.6 GPU-hours / 7.0G \\
		FX-DARTS-$\mathcal{O}_{1}$ (64E) & 0.92 & 148 & 78.42 & 78.17 & 78.02 & 78.20$\pm$0.20 & 96.11 & 95.66 & 95.81 & 95.86$\pm$0.05 & 1.6 GPU-hours / 7.0G \\
		FX-DARTS-$\mathcal{O}_{1}$ (80E) & 0.84 & 133 & 77.92 & 77.86 & 77.90 & 77.89$\pm$0.00 & 95.64 & 95.80 & 95.96 & 95.80$\pm$0.03 & 1.6 GPU-hours / 7.0G \\
		\midrule
		FX-DARTS-$\mathcal{O}_{2}$ (48E) & 1.18 & 181 & 78.09 & 78.20 & 78.44 & 78.24$\pm$0.03 & 96.03 & 95.92 & 96.12 & 96.02$\pm$0.01 & 1.7 GPU-hours / 7.6G \\
		FX-DARTS-$\mathcal{O}_{2}$ (64E) & 0.94 & 142 & 77.63 & 77.25 & 78.20 & 77.69$\pm$0.48 & 95.63 & 95.71 & 95.79 & 95.71$\pm$0.01 & 1.7 GPU-hours / 7.6G\\
		\midrule
		FX-DARTS-$\mathcal{O}_{3}$ (48E) & 1.26 & 195 & 77.97 & 78.18 & 77.63 & 77.93$\pm$0.08 & 95.88 & 96.10 & 95.90 & 95.96$\pm$0.01 & 2.7 GPU-hours / 11G \\
		FX-DARTS-$\mathcal{O}_{3}$ (64E) & 1.01 & 151 & 77.10 & 77.10 & 77.64 & 77.28$\pm$0.31 & 95.99 & 95.85 & 95.88 & 95.91$\pm$0.01 & 2.7 GPU-hours / 11G \\
		\bottomrule
	\end{tabular}
	}
	\begin{tablenotes}
	\scriptsize
	\item The search cost for FX-DARTS is tested on NVIDIA RTX 4090 GPUs.
	\end{tablenotes}
	\label{tab:results_CIFAR-10}
\end{table*}
\begin{figure}
	\centering
	\includegraphics[width=1.0\linewidth]{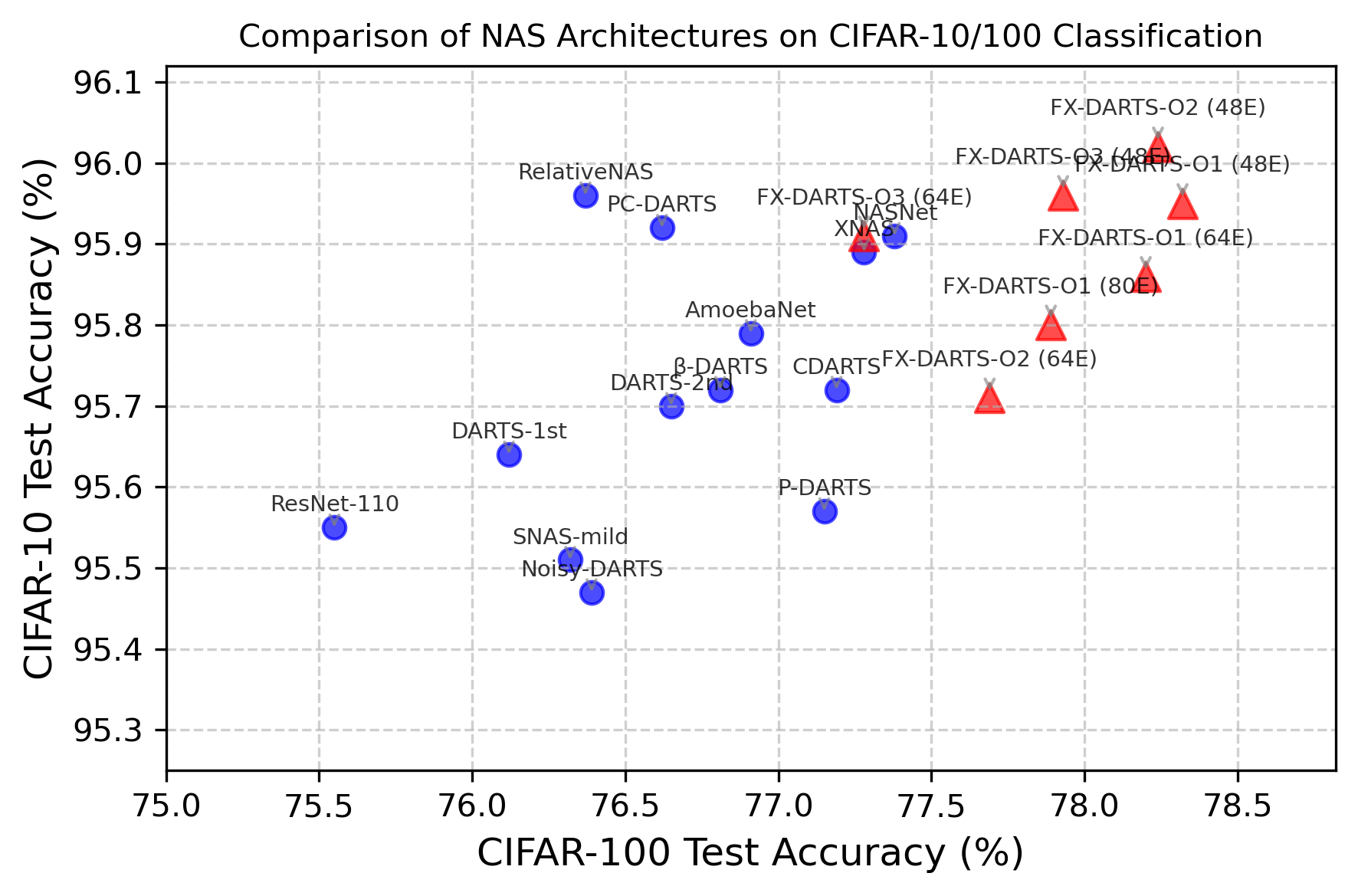}
	\caption{Performance comparison of FX-DARTS vs. Topology-constrained NAS architectures on CIFAR-10/100 classification tasks. The results correspond to Table \ref{tab:results_CIFAR-10}.}
	\label{fig:nascomparison}
\end{figure}

\begin{figure}
	\centering
	\includegraphics[width=1.0\linewidth]{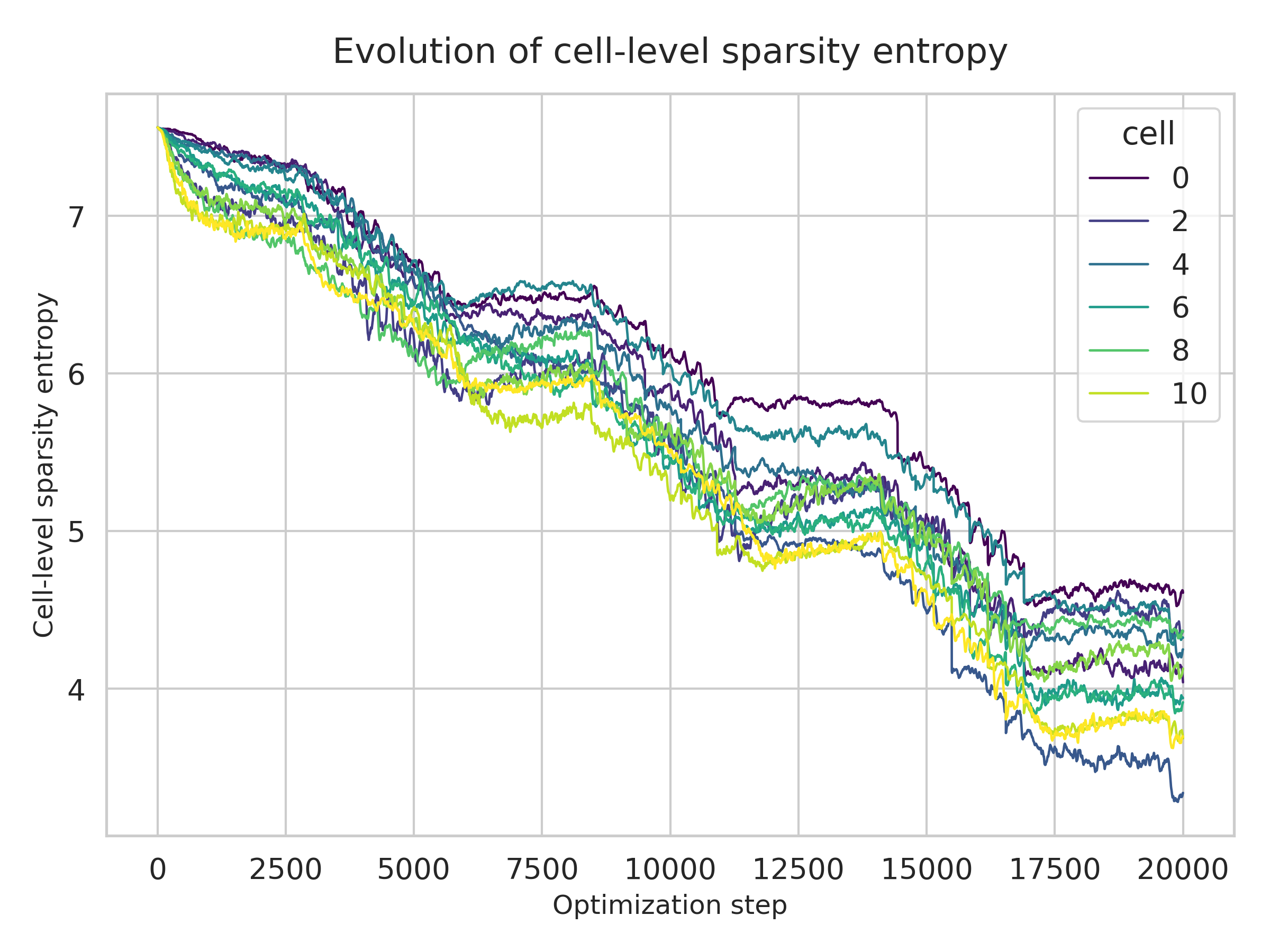}
	\caption{Evolution of cell-level sparsity entropy during the architecture search process. As shown, the gradually decreased sparsity entropy of each cell indicates that the super-network structure becomes increasingly sparse.}
	\label{fig:entropyhistory}
\end{figure}

The evaluation results are summarized in TABLE \ref{tab:results_CIFAR-10} and Fig. \ref{fig:nascomparison}. Partial FX-DARTS architectures are also visualized in Fig. \ref{fig:network_cifar}. Specifically, FX-DARTS-$\mathcal{O}_{1}$ (48E) refers to the architecture derived after 48 epochs (equivalent to $3\times T_{\rm search}$ when $T_{\rm search} = 16$) in operator space $\mathcal{O}_{1}$, and the other architectures follow the same naming convention. Several key observations are as follows.
\begin{itemize}
	\item FX-DARTS architectures consistently achieve competitive results on both CIFAR-10 and CIFAR-100 datasets. For instance, FX-DARTS-$\mathcal{O}_{1}$ (48-epoch), which achieves an average test accuracy of 78.32\% on CIFAR-100 and 95.95\% on CIFAR-10, outperforms all baseline architectures with constrained structures.
	
	\item FX-DARTS is able to derive competitive architectures across different operator spaces. For example, FX-DARTS-$\mathcal{O}_{1}$, FX-DARTS-$\mathcal{O}_{2}$ and FX-DARTS-$\mathcal{O}_{3}$ achieve high accuracy on CIFAR-10 and CIFAR-100 compared to other baseline architectures. 
	
	\item In addition, FX-DARTS is able to derive architectures with varying computational complexities within a single search procedure. For example, FX-DARTS-$\mathcal{O}_{1}$ (48E) has 1.06M parameters and 165M FLOPs, while FX-DARTS-$\mathcal{O}_{1}$ (80E) has 0.84M parameters and 133M FLOP.
\end{itemize}
What's more, Fig. \ref{fig:entropyhistory} illustrates the evolution of cell-level sparsity entropy during the architecture search process. As the search progresses, the sparsity entropy of each cell gradually decreases, indicating that the super-network structure becomes increasingly sparse.

\subsection{Multi-task evaluation}
\begin{table}[htb]
	\centering
	\caption{Datasets used in multi-task evaluations}
	\scalebox{0.9}{
		\begin{tabular}{lllll}
			\toprule
			\textbf{Dataset} & \textbf{Classes} & \textbf{Training Images} & \textbf{Test Images} & \textbf{Image Size} \\
			\midrule
			CIFAR-100        & 100             & 50,000                   & 10,000               & $32 \times 32$               \\
			TinyImageNet    & 200             & 100,000                  & 10,000               & $64 \times 64$               \\
			SVHN             & 10              & 73,257                   & 26,032               & $32 \times 32$               \\
			Flowers102       & 102             & 1,020                    & 6,149                & $64 \times 64$               \\
			\bottomrule
		\end{tabular}
	}
	\label{tab:dataset_info}
\end{table}
To further validate the generalization capability and robustness of FX-DARTS architectures across diverse visual recognition tasks, we conduct multi-task evaluations on four benchmark datasets including CIFAR-100, TinyImageNet, SVHN, and Flowers102. The key characteristics of these datasets are summarized in Table \ref{tab:dataset_info}. Our evaluation protocol maintains consistent training hyper-parameters with the CIFAR-10 experiments except that the batch size is 256 and the initial learning rate is 0.1. Data augmentation strategies include random cropping (20-100\% scale), horizontal flipping (50\% probability), color jittering (80\% probability), grayscale conversion (20\% probability), and ImageNet-standard normalization. 

Furthermore, we also conduct FX-DARTS on the TinyImageNet dataset. The search hyper-parameters are the same as those in CIFAR-10 experiments except that the images are resized to $64\times64$ and the training images are augmented by those the same as in the training phase. 

\begin{table*}[htbp]
	\centering
	\caption{Comparison results of NAS architectures on multi-task evaluations}
	\resizebox{\textwidth}{!}{
		\begin{tabular}{lccccccc}
			\toprule
			Model            & Params (M) & FLOPs (M) & CIFAR-100 (\%) & TinyImageNet (\%) & SVHN (\%) & Flowers102 (\%) & Overall (\%) \\
			\midrule
			Noisy-DARTS      & 3.3        & 381       & 78.06          & 62.31             & 96.75     & 66.38           & 82.99       \\
			XNAS             & 3.6        & 438       & 77.56          & 60.86             & 96.23     & 66.21           & 82.34       \\
			$\beta$-DARTS    & 3.5        & 419       & 77.65          & 62.37             & 96.41     & 67.28           & 82.86       \\
			CDARTS-CIFAR     & 3.7        & 441       & 77.67          & 60.22             & 96.35     & 66.09           & 82.28       \\
			DARTS-V2         & 3.3        & 389       & 77.28          & 61.73             & 96.35     & 67.38           & 82.65       \\
			PC-DARTS         & 3.6        & 394       & 77.44          & 61.21             & 96.57     & 67.91           & 82.79       \\
			P-DARTS          & 3.6        & 421       & 77.80          & 62.56             & 96.89     & 67.31           & 83.17       \\
			RelativeNAS      & 3.7        & 423       & 77.47          & 61.15             & 96.47     & 68.14           & 82.72       \\
			\midrule
			FX-DARTS-$\mathcal{O}_{1}$ (64E) & 4.0        & 472       & 78.01          & 62.83             & 96.36     & 68.69           & 83.21       \\
			FX-DARTS-$\mathcal{O}_{2}$ (64E) & 4.0        & 456       & 78.27          & 63.36             & 96.29     & 68.55           & 83.26       \\
			FX-DARTS-$\mathcal{O}_{3}$ (64E) & 4.2        & 481       & 78.67          & 63.52             & 96.16     & 68.19           & 83.29       \\
			FX-DARTS-$\mathcal{O}_{1}$ (Tiny, 64E) & 3.8        & 445       & 78.42          & 63.38             & 96.56     & 68.79           & 83.45       \\
			FX-DARTS-$\mathcal{O}_{2}$ (Tiny, 64E)& 3.7        & 440       & 78.38          & 63.25             & 96.32     & 68.45           & 83.25       \\
			FX-DARTS-$\mathcal{O}_{3}$ (Tiny, 64E)& 3.5        & 416       & 78.59          & 63.42             & 96.58     & 68.55           & 83.47       \\
			\bottomrule
		\end{tabular}
	}
	\label{tab:model_performance_updated}
\end{table*}
	
\begin{figure}
	\centering
	\includegraphics[width=1.05\linewidth]{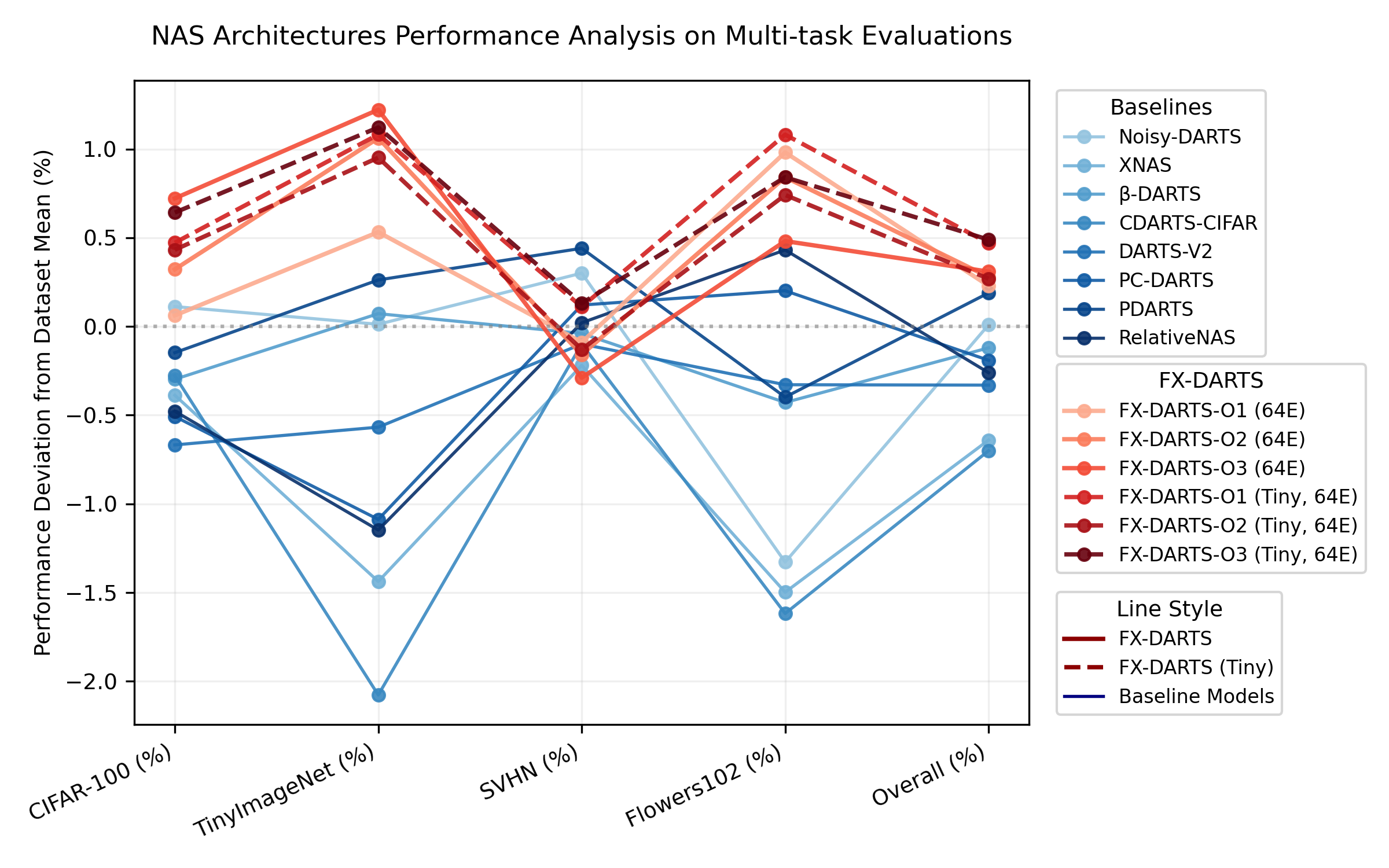}
	\caption{Performance comparison of FX-DARTS vs. Topology-constrained architectures on the multi-task evaluations. The results correspond to Table \ref{tab:model_performance_updated}. Specifically, The figure illustrates the relative accuracy improvement (in percentage points) compared to the mean accuracy of all models in each dataset.}
	\label{fig:multitaskanalysissinglecol}
\end{figure}

The results of the multi-task evaluation are summarized in TABLE \ref{tab:model_performance_updated}. Specifically, the table presents the performance of different architectures across the four datasets, along with the average accuracy across all tasks. In addition, Fig. \ref{fig:multitaskanalysissinglecol} presents the relative accuracy improvements of each model, which is calculated by subtracting the accuracy by the mean of all models in each datasets. Three key observations are as follows.

\begin{itemize}
	\item FX-DARTS architectures consistently outperform conventional topology-constrained NAS architectures across all evaluation benchmarks. The FX-DARTS-$\mathcal{O}_{3}$ (64E) architecture, which achieves the state-of-the-art results on CIFAR-100 (78.67\%) and TinyImageNet (63.52\%), demonstrates particularly strong performance on complex classification tasks requiring fine-grained discrimination.
	
	\item FX-DARTS models achieve significant accuracy improvements with comparable computational complexity. Notably, FX-DARTS-$\mathcal{O}_3$ (Tiny, 64E) attains a 0.86\% higher accuracy on TinyImageNet (63.42\% vs. 62.56\%) than P-DARTS (which is the optimal topology-constrained architecture) while reducing FLOPs by 1.1\% (415.91M vs. 420.69M).
	
	\item In addition, models searched directly on TinyImageNet exhibit enhanced task adaptability. For example, FX-DARTS-$\mathcal{O}_3$ (Tiny, 64E) achieves the highest overall accuracy (83.47\%) with 3.49M parameters and 416M FLOPs.
\end{itemize}

\subsection{Ablation study}
\begin{table*}[htbp]
	\centering
	\caption{Ablation study results of FX-DARTS components on multi-task evaluations}
	\label{tab:ablation_study}
	\begin{tabular}{lccccccc}
		\toprule
		Model & Params (M) & FLOPs (M) & CIFAR-100 (\%) & TinyImageNet (\%) & SVHN (\%) & Flowers102 (\%) & Overall (\%) \\
		\midrule
		one-level DARTS & 3.86 & 454 & 77.15 & 62.27 & 96.02 & 68.68 & 82.72 \\
		DARTS+AO+MR & 3.89 & 450 & 78.32 & 62.68 & 96.31 & 69.44 & 83.25 \\
		DARTS+AO+SNS & 3.82 & 451 & 78.06 & 62.77 & 96.41 & 69.13 & 83.23 \\
		FX-DARTS ($T=8$) & 3.66 & 434 & 77.88 & 62.85 & 95.93 & 67.51 & 82.87 \\
		FX-DARTS ($T=16$) & 3.80 & 445 & 78.42 & 63.38 & 96.56 & 68.79 & 83.45 \\
		FX-DARTS ($T=32$) & 3.65 & 441 & 78.54 & 63.56 & 96.36 & 69.80 & 83.53 \\
		FX-DARTS ($c=1.3/0.7$) & 3.63 & 437 & 77.94 & 62.80 & 96.25 & 68.01 & 83.08 \\
		FX-DARTS (2$\Delta E$) & 3.83 & 445 & 78.08 & 62.55 & 96.29 & 67.93 & 82.99 \\
		\bottomrule
	\end{tabular}
	
	\begin{tablenotes}
		\scriptsize
		\item AO: Architectural optimization, MR: Cycling model reinitialization, SNS: Super-network shrinking with adaptive coefficient adjustment.
		\item $T=16/32/8$: Search epoch settings of $T_{\rm search}=8/16/32$, $c=1.3/0.7$: Balance coefficients setting of $c_1 = 1.3$ and $c_{2} = 0.7$.
	\end{tablenotes}
\end{table*}

To validate the effectiveness of key components in FX-DARTS, we conduct comprehensive ablation experiments by progressively integrating the components of FX-DARTS into the DARTS framework. Table \ref{tab:ablation_study} summarizes the results. Summary of analysis is given as follows.

\begin{itemize}
	\item \textbf{The analysis on search epochs}: 
	FX-DARTS ($T=32$) achieves the 83.53\% overall accuracy, showing +0.66\% improvement over FX-DARTS ($T=8$) (82.87\%). However, the gain over FX-DARTS ($T=16$) ($83.45\%$) is marginal ($+0.08\%$), which suggest that $T_{\rm search}=16$ strikes an appropriate balance between search cost and architecture performance.
	
	\item \textbf{The impact of expected entropy reduction}: In particular, we double the expected entropy reduction for each step as $\Delta E \leftarrow 2\Delta E$ and verify its effects on the derived architectures. Specifically, the 2$\Delta E$ FX-DARTS variant ($82.99\%$) underperforms FX-DARTS ($T=16$) ($83.45\%$) by $-0.46\%$. It indicates that aggressive entropy minimization prematurely collapses the search space. In addition, FX-DARTS without adaptive entropy regularization (namely, DARTS+AO+SNS), is also very competitive ($83.24\%$ overall accuracy), but slightly lower than that of FX-DARTS ($T=16$) ($83.45\%$). In particular, the results show that the adaptive adjustment for sparsity regularization with moderate expected entropy reduction preserves diverse operator exploration while gradually focusing on promising candidates.
	
	\item \textbf{The sensitivity on shrinking coefficients}:
	FX-DARTS with $c_1=1.3$ and $c_2=0.7$ shows $0.37\%$ accuracy drop ($83.08\%$ vs. $83.45\%$) compared to default coefficients of $c_{1} = 1.05$ and $c_{2} = 0.95$, which indicates that drastic coefficient adjustments destabilize the balance between architecture exploration and exploitation. 
	
	\item \textbf{Component additions}: 
	DARTS+AO+MR ($83.25\%$) and DARTS+AO+SNS ($83.23\%$) both surpass the one-level DARTS baseline ($82.72\%$) by $+0.53\%$ and $+0.51\%$, respectively. This validates that architectural optimization synergizes with either model reinitialization or super-network shrinking with adaptive coefficient adjustment to prevent parameter co-adaptation.
	
	\item \textbf{ESS and MR comparison}:
	While MR slightly outperforms SNS ($83.25\%$ vs. $83.23\%$), their comparable gains suggest both strategies effectively address architecture overfitting. The full framework combines their strengths through phased implementation.
	
	\item \textbf{Full framework synergy}:
	FX-DARTS ($T=16$) achieves the best balance with $83.45\%$ accuracy, which is $+0.73\%$ over baseline and $+0.20\%$ over partial implementations. It demonstrates that coordinated component integration (AO, MR, and SNS) yields superior architectures.
\end{itemize}

\subsection{Experiments on ImageNet}
\subsubsection{Architecture search}
\begin{table*}[htbp]
	\centering
	\caption{Comparison with state-of-the-art neural architectures on ImageNet-1K}
	\begin{tabular}{lccccc}
		\toprule
		Architecture & Params (M) & FLOPs (M) & \multicolumn{2}{c}{Accuracy (\%)} & Search Cost \\
		\cmidrule(lr){4-5}
		& & & Top-1 & Top-5 & (GPU-time / GPU-Mem) \\
		\midrule
		Inception-V1 \cite{szegedy2015going} & 6.6 & 1448 & 69.8 & 89.9 & - \\
		ShuffleNet-V1 ($2\times$) \cite{DBLP:conf/cvpr/ZhangZLS18} & 5.4 & 524 & 73.6 & 89.8 & - \\
		ShuffleNet-V2 ($2\times$) \cite{DBLP:conf/eccv/MaZZS18} & 7.4 & 591 & 74.9 & 92.4 & - \\
		MobileNet-V1 \cite{howard2017mobilenets} & 4.2 & 575 & 70.6 & 89.5 & - \\
		MobileNet-V2 ($1.4\times$) \cite{DBLP:conf/cvpr/SandlerHZZC18} & 6.9 & 585 & 74.7 & - & - \\
		\midrule
		NASNet-A \cite{baker2016designing} & 5.3 & 564 & 74.0 & 91.6 & 1800 GPU-days / - \\
		AmoebaNet-A \cite{DBLP:conf/aaai/RealAHL19} & 5.1 & 555 & 74.5 & 92.4 & 3150 GPU-days / - \\
		DARTS-2nd \cite{liu2018darts} & 4.7 & 574 & 73.3 & 91.3 & 4.0 GPU-days / - \\
		PC-DARTS (CIFAR) \cite{xu2019pc} & 5.3 & 586 & 74.9 & 92.2 & 0.3 GPU-days / - \\
		PC-DARTS (ImageNet) \cite{xu2019pc} & 5.3 & 597 & 75.8 & 92.7 & 3.8 GPU-days / - \\
		P-DARTS \cite{chen2021progressive} & 4.9 & 557 & 75.6 & 92.6 & 0.3 GPU-days / - \\
		GDAS \cite{dong2019searching} & 5.3 & 581 & 74.0 & 91.5 & 0.2 GPU-days / - \\
		RandWire-WS \cite{DBLP:conf/iccv/XieKGH19} & 5.6 & 583 & 74.7 & 92.2 & - \\
		$\ell$-DARTS \cite{hu2024} &  5.8 & - & 75.1 & 92.4 & 0.06 GPU-days / - \\
		SWD-NAS \cite{xue2024self} & 6.3 & - & 75.5 & 92.4 & 0.13 GPU-days / - \\
		STO-DARTS-V2 \cite{cai2024sto} & 3.8 & - & 75.1 & 92.8 & - \\
		SaDENAS \cite{han2024sadenas} & 5.6 & - & 75.1 & 92.0 & - \\
		EG-NAS \cite{cai2024eg} & 5.3 & - & 75.1 & - & 0.1 GPU-days / - \\ 
		OLES \cite{jiang2024operation} & 4.7 & - & 75.5 & 92.6 & 0.4 GPU-days / - \\
		DARTS-PT-CORE \cite{xie2024darts} & 5.0 & - & 75.0 & - & 0.8 GPU-days / - \\
		
		\midrule
		FX-DARTS-$\mathcal{O}_{1}$ (64E) & 5.3 & 592 & 75.9 & 92.7 & 1.6 GPU-hours / 7.0G \\ 
		FX-DARTS-$\mathcal{O}_{3}$ (64E) & 5.2 & 560 & 76.0 & 92.8 & 2.7 GPU-hours / 11G \\
		FX-DARTS-$\mathcal{O}_{1}$ (Tiny, 64E) & 4.9 & 560 & 76.0 & 93.7 & 2.7 GPU-hours / 14G \\
		FX-DARTS-$\mathcal{O}_{3}$ (Tiny, 64E) & 4.9 & 575 & 76.2 & 93.0 & 3.0 GPU-hours / 20G \\
		FX-DARTS-$\mathcal{O}_{1}$ (Tiny, 128E, $T_{\rm search} = 32$) & 5.1 & 610 & 76.4 & 93.4 & 4.3 GPU-hours / 20G \\
		\bottomrule
	\end{tabular}
	\label{tab:results_imagenet-1K}
\end{table*}

In this section, we evaluate the CIFAR-10 and TinyImageNet-derived architectures on the ImageNet-1K dataset, which contains 1.3 million training images and 50,000 validation images. Specifically, we select the top-performing FX-DARTS architectures from TinyImageNet for evaluation. The architectures are trained for 250 epochs on four RTX 4090 GPUs in parallel. The initial number of network channels is set to 46. The model parameters are optimized using an SGD optimizer with an initial learning rate of \(0.5\), a weight decay of \(3 \times 10^{-4}\), a gradient clipping threshold of \(5.0\), a batch size of 1024, and a momentum of \(0.9\). The learning rate decays to zero following a linear scheduler during training. Additional enhancements include label smoothing with a rate of 0.1 and a learning rate warm-up applied to the first 5 epochs. Most settings follow those used in DARTS \cite{liu2018darts}. Table \ref{tab:results_imagenet-1K} presents the evaluation results, which reveal several key insights as

\begin{itemize}
	\item FX-DARTS-$\mathcal{O}_{3}$ (Tiny, 64E), which achieves 76.2\% top-1 accuracy with only 3.0 GPU-hours of search cost, outperforms both PC-DARTS (75.8\%) and MobileNet-V2 (74.7\%). It demonstrates our framework's ability to discover high-performance architectures in the enlarged search space while maintaining computational efficiency (575M FLOPs vs. 585M FLOPs baseline).
	
	\item Architectures discovered on smaller datasets exhibit strong transferability. For example, the FX-DARTS-$\mathcal{O}_{1}$ (Tiny, 64E) model, searched using TinyImageNet, achieves 76.0\% top-1 accuracy on ImageNet-1K, which is comparable to in-domain NAS results (PC-DARTS: 75.8\%) but with a shorter search time (2.7 GPU-hours vs. 3.8 GPU-days). 
	
	\item FX-DARTS-$\mathcal{O}_{1}$ (Tiny, 128E) demonstrates that the performance of FX-DARTS can be further improved (76.4\% accuracy, a +0.4\% increase over the 64E version) by doubling the search epochs while maintaining practical costs (4.3 GPU-hours). These experiments demonstrate that the representation performance of NAS architectures can be significantly enhanced under similar computational complexity with flexible architectures.
\end{itemize}

\section{Conclusions}
This paper presents FX-DARTS, a novel differentiable neural architecture search method that reduces prior constraints on cell topology and discretization mechanisms, to explore more flexible neural architectures. By eliminating the topology-sharing strategy and operator retention rules in conventional DARTS, FX-DARTS significantly expands the search space while addressing the resultant optimization challenges through the proposed Entropy-based Super-network Shrinking (ESS) framework. ESS which dynamically balances exploration and exploitation by adaptively adjusting sparsity regularization based on entropy reduction feedback enables the derivation of diverse architectures with varying computational complexities within a single search procedure. Extensive experiments on CIFAR-10/100, TinyImageNet, and ImageNet-1K demonstrate that FX-DARTS discovers architectures achieving competitive performance (e.g., 78.32\% on CIFAR-100, 76.4\% top-1 accuracy on ImageNet-1K) compared to state-of-the-art topology-constrained NAS methods, while maintaining efficient search costs (3.2 GPU-hours).

However, our current experiments have not demonstrated the overwhelming advantages of flexible architectures over the state-of-the-art architectures searched in the constrained space of DARTS. The possible reason is that the space enlargement prevents the NAS method from obtaining preferable architectures, as shown by ablation experiments which reveal unsatisfactory architectures resulting from the direct search of DARTS in the enlarged space. Overall, our future direction is to develop a more effective NAS method to derive more powerful architectures with flexible structures, regardless of the significant challenges posed by space enlargement and the absence of prior rules on discrete architectures.

\bibliographystyle{IEEEtran}
\bibliography{mybibfile}
\end{document}